\documentclass{article}

 \usepackage[preprint]{neurips_2026}

\usepackage[utf8]{inputenc} 
\usepackage[T1]{fontenc}    
\usepackage{hyperref}       
\usepackage{url}            
\usepackage{booktabs}       
\usepackage{amsfonts}       
\usepackage{nicefrac}       
\usepackage{microtype}      
\usepackage{xcolor}         

\usepackage{amsmath}
\usepackage{amssymb}
\usepackage{mathtools}
\usepackage{amsthm}
\usepackage{mathrsfs}

\usepackage{bm,bbm}
\usepackage{xspace}
\usepackage{multirow}
\usepackage{cancel}
\usepackage[normalem]{ulem}
\usepackage{enumitem}
\usepackage{caption}
\usepackage{booktabs}

\usepackage[ruled,vlined,linesnumbered]{algorithm2e}

\usepackage{subcaption}
\usepackage{graphicx}
\usepackage{threeparttable, array, float}
\usepackage{wrapfig}

\usepackage{tabularx}
\usepackage[table]{xcolor}
\usepackage{array}
\usepackage{booktabs}
\usepackage{threeparttable}
\usepackage{circledsteps}

\usepackage{tocloft}

\definecolor{mydarkblue}{rgb}{0,0.08,0.45}
\definecolor{MyCrimson}{RGB}{220, 20, 60}
\hypersetup{
    colorlinks=true,
    linkcolor=mydarkblue,
    citecolor=mydarkblue,
    urlcolor=mydarkblue
}

\usepackage[capitalize,noabbrev]{cleveref}
\usepackage{pifont}
\usepackage{tcolorbox}
\usepackage{nicematrix}

\usepackage[dvipsnames]{xcolor}
\newcommand{\cmark}{\textcolor{ForestGreen}{\ding{51}}}
\newcommand{\xmark}{\textcolor{Maroon}{\ding{55}}}

\theoremstyle{plain}
\newtheorem{theorem}{\bf Theorem}
\newtheorem{lemma}{\bf Lemma}
\newtheorem{corollary}{\bf Corollary}

\theoremstyle{remark}

\newtheorem{definition}{\bf Definition}

\newtheorem{assumption}{\bf Assumption}

\crefname{assumption}{Assumption}{Assumptions}
\Crefname{assumption}{Assumption}{Assumptions}

\Crefname{condition}{Condition}{Conditions}
\Crefname{proposition}{Proposition}{Proposition}

\DeclareMathOperator*{\argmin}{arg\,min}
\newcommand{\alg}{$\mathsf{PROBE}~$}
\newcommand{\algn}{$\mathsf{PROBE}$}

\title{To Solve Bilevel Optimization with Nonconvex Lower Levels, We Need Second-Order Stationarity}

\author{%
\textbf{Zhiyao Zhang\textsuperscript{1}, Menglu Yu\textsuperscript{2}, Alvaro Velasquez\textsuperscript{3}},
\textbf{Nathaniel D. Bastian\textsuperscript{4}, Jia Liu\textsuperscript{1}} \\
\textsuperscript{1}The Ohio State University\,\,\,
\textsuperscript{2}Meta\,\,\,
\textsuperscript{3}University of Colorado Boulder\,\,\,
\textsuperscript{4}Johns Hopkins University \\
\texttt{zhang.15178@osu.edu, liu@ece.osu.edu}
}

\begin{document}

\maketitle

\begin{abstract}
Although bilevel optimization (BLO) has emerged as a powerful framework for addressing many complex and nested machine learning problems in recent years, most existing studies are confined to the lower-level strongly convex (LLSC) or lower-level generally convex (LLGC) settings (i.e., the lower-level objective function is assumed to be, at least, convex).
While the LLSC/LLGC assumptions render more tractable algorithmic design and theoretical analysis, they are too rigid to encompass many machine learning problems in practice.
The limitations of LLSC/LLGC assumptions in BLO motivate us to investigate solving the BLO problem in the general lower-level nonconvex (LLNC) settings, which remains in its infancy.
In the literature on LLNC-BLO, most of the existing works either require additional structures in the lower-level objective function for tractable theoretical analysis, or adopt the first-order stationarity reformulation as a lower-level surrogate problem, which is inherited from the LLSC/LLGC settings but could lose their effectiveness in the LLNC setting.
To bridge this gap, in this work, we propose to reformulate the nonconvex lower-level problem using a {\em second-order stationarity-based surrogate,} the solution of which guarantees a local optimal solution at the lower level.
Based on this reformulation, we propose the \alg (\underline{P}erturbed g\underline{r}adient alg\underline{o}rithm for \underline{b}ilevel probl\underline{e}m) and show that it overcomes the limitations of prior works by probing and escaping lower-level saddle points.
We theoretically prove that \alg achieves a finite-time convergence rate of $\mathcal{O}(T^{-\frac{2}{5}})$, where $T$ denotes iterations.
To our knowledge, this work is the first to establish the finite-time convergence rate guarantee for achieving lower-level second-order stationary solutions in general LLNC-BLO.
Our experiments on both a large language model-based data curation task and a meta-learning task also show that \alg outperforms state-of-the-art methods.
\end{abstract}

\section{Introduction}

In recent years, bilevel optimization (BLO)~\citep{bracken1973mathematical,liu2021investigating,zhang2024introduction} has received significant attention in the machine learning community due to the rise of a wide range of complex and nested machine learning problems, such as reinforcement learning \citep{hong2023two,kudo2026sample}, meta-learning \citep{franceschi2018bilevel,ji2021bilevel}, adversarial training \citep{zhang2022revisiting}, large language model (LLM) fine-tuning \citep{shen2024seal}, to name just a few.
In general, a BLO problem can be formulated as follows:
\begin{equation}\label{eq:BLO}
    \begin{aligned}
        \mathrm{BLO:} \,\, \min_{x\in\mathbb{R}^p,y\in\mathbb{R}^q} f(x,y) \hspace{3em}
        \text{subject to } &\,\, y \in \mathcal{S}(x) := \argmin_y g(x,y),
        \vspace{-0.5em}
    \end{aligned}
\end{equation}
where $f(x,y)$ and $g(x,y)$ are the upper- and lower-level objective functions, respectively.
Clearly, the challenge of solving a BLO problem stems from the nested structure, i.e., part of the decision variables $y$ in the upper-level objective $f(x,y)$ is obtained from the set of optimizers $S(x)$ given an upper-level variable $x\in\mathbb{R}^p$.
To address this challenge and for tractable algorithmic design and analysis, most existing works in the BLO literature have relied on the \textit{restrictive} lower-level strong convexity (LLSC) assumption, i.e., $g(x,\cdot)$ is \textit{strongly convex} with respect to (w.r.t.) $y$ for any given $x\in\mathbb{R}^p$ \citep{ghadimi2018approximation,ji2021bilevel,yang2021provably,dagreou2022framework}, thereby ensuring a singleton solution set for the lower-level problem and well-defined upper-level hypergradient through the implicit function theorem.
To address the limitation of the stringent LLSC assumption, a recent line of works in the BLO literature relaxes LLSC to lower-level general convexity (LLGC), i.e., $g(x,\cdot)$ is \textit{convex} w.r.t. $y$ \citep{cao2023projection,jiang2023conditional,liu2023averaged}.
Unfortunately, LLGC remains highly restrictive, since most modern learning models are based on deep neural networks, which are highly {\em nonconvex} and violate the LLGC assumption.

To date, research on bilevel optimization (BLO) problems with nonconvex lower levels (LLNC) remains in its infancy, and existing results are limited. To the best of our knowledge, early attempts to address LLNC-BLO \citep{kwon2023penalty, shen2023penalty, xiao2023generalized, chen2024finding, liu2024moreau, jiang2025correspondence, jiang2025discretization, ma2026sun} have achieved only partial success and exhibit various limitations. 
One key reason for this stagnation is that, under LLNC, finding an optimal solution $y^*(x)$ to the lower-level problem is already intractable in general (typically NP-hard). 
This difficulty is further compounded by the potential non-uniqueness of $y^*(x)$ and the challenge of establishing well-defined upper-level hypergradients given a $y^*(x)$. 
Consequently, the aforementioned works either impose specific structural assumptions or consider relaxations of the LLNC-BLO problem.
For example, some existing works \citep{kwon2023penalty, jiang2025correspondence, jiang2025discretization} assume bounded domains for the variables (either $x$, $y$, or both), which limits their practical applicability.
Other works relax the LLNC-BLO formulation by replacing the lower-level optimality condition $y \in \arg\min_{y} g(x,y)$ with the first-order stationarity condition, i.e., seeking $y$-solutions that satisfy $\nabla_y g(x,y)=0$.
Although some studies further assume that the lower-level objective $g(x,y)$ satisfies the Polyak–Łojasiewicz (PL) condition \citep{shen2023penalty, xiao2023generalized, liu2024moreau, ma2026sun}, thereby ensuring that any first-order stationary point (FOSP) is globally optimal.
However, the PL condition is restrictive, since nonconvex problems may generally exhibit flat regions, saddle points, or suboptimal local minima, under which the PL condition fails to hold.

In this work, we argue that algorithmic designs using the FOSP relaxation as a surrogate for the lower-level problem could be easily trapped in undesirable saddle points of the lower-level objective (i.e., points that are FOSP but not local minima), thereby yielding poor solutions to the overall LLNC-BLO problem.
For example, consider the simple LLNC-BLO problem in Fig.~\ref{fig:EXAMPLE} where $x,y\in\mathbb{R}$, $f(x,y) = (x-2)^2 + y^2$, $g(x,y) = \frac{1}{4}y^4 - \frac{1}{2}xy^2$.
It can be readily verified that the PL condition fails to hold even for this simple example.
Consequently, sequences generated by existing algorithms based on the FOSP surrogate converge only to {\em saddle points} of $g(x,\cdot)$ (cf. Fig.~\ref{fig:contour}), leading to solutions that are significantly suboptimal compared to the true optimum (cf. Fig.~\ref{fig:gap}).

\begin{figure}[t]
    \centering
    \begin{subfigure}[t]{0.47\textwidth}
        \centering
        \includegraphics[width=\textwidth]{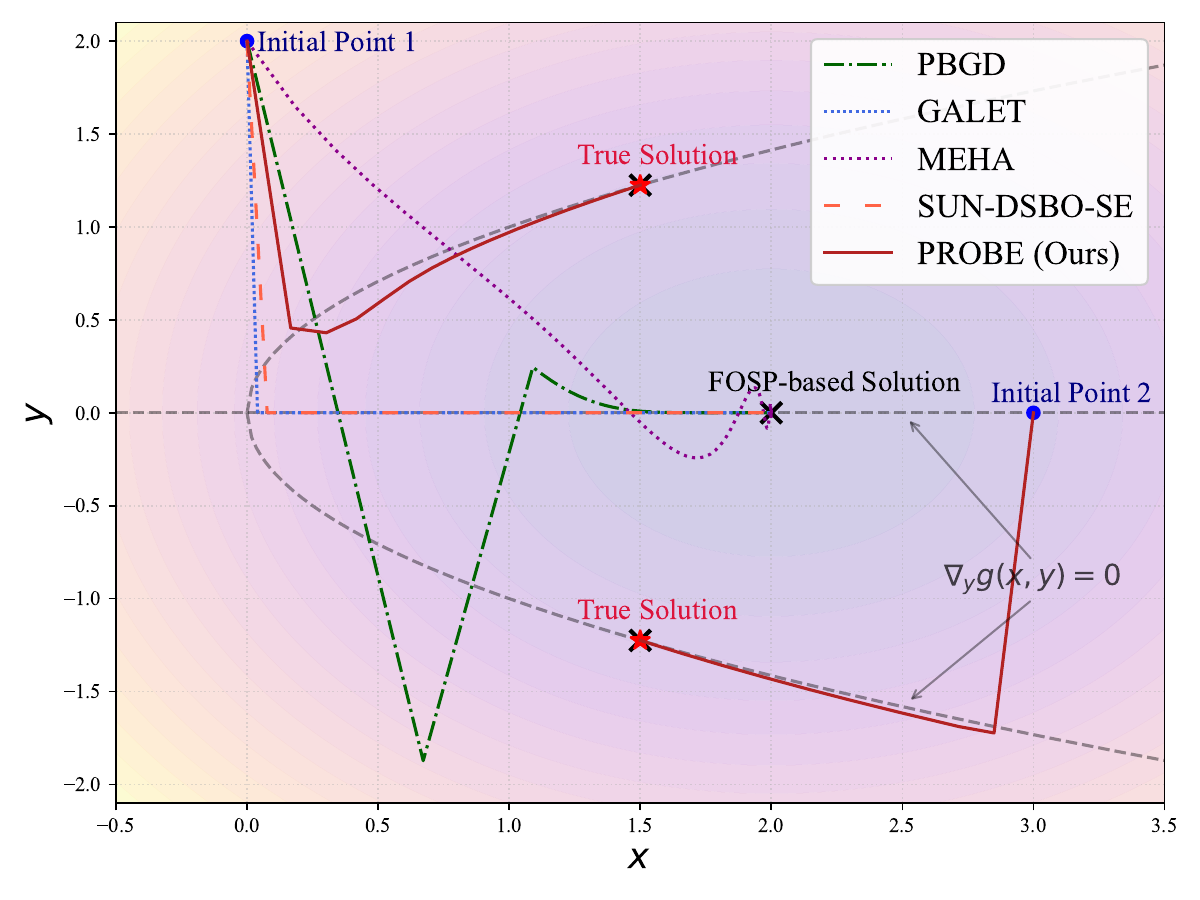}
        \vspace{-2em}
        \caption{Converging trajectories.}
        \label{fig:contour}
    \end{subfigure}
    \hfill
    \begin{subfigure}[t]{0.47\textwidth}
        \centering
        \includegraphics[width=\textwidth]{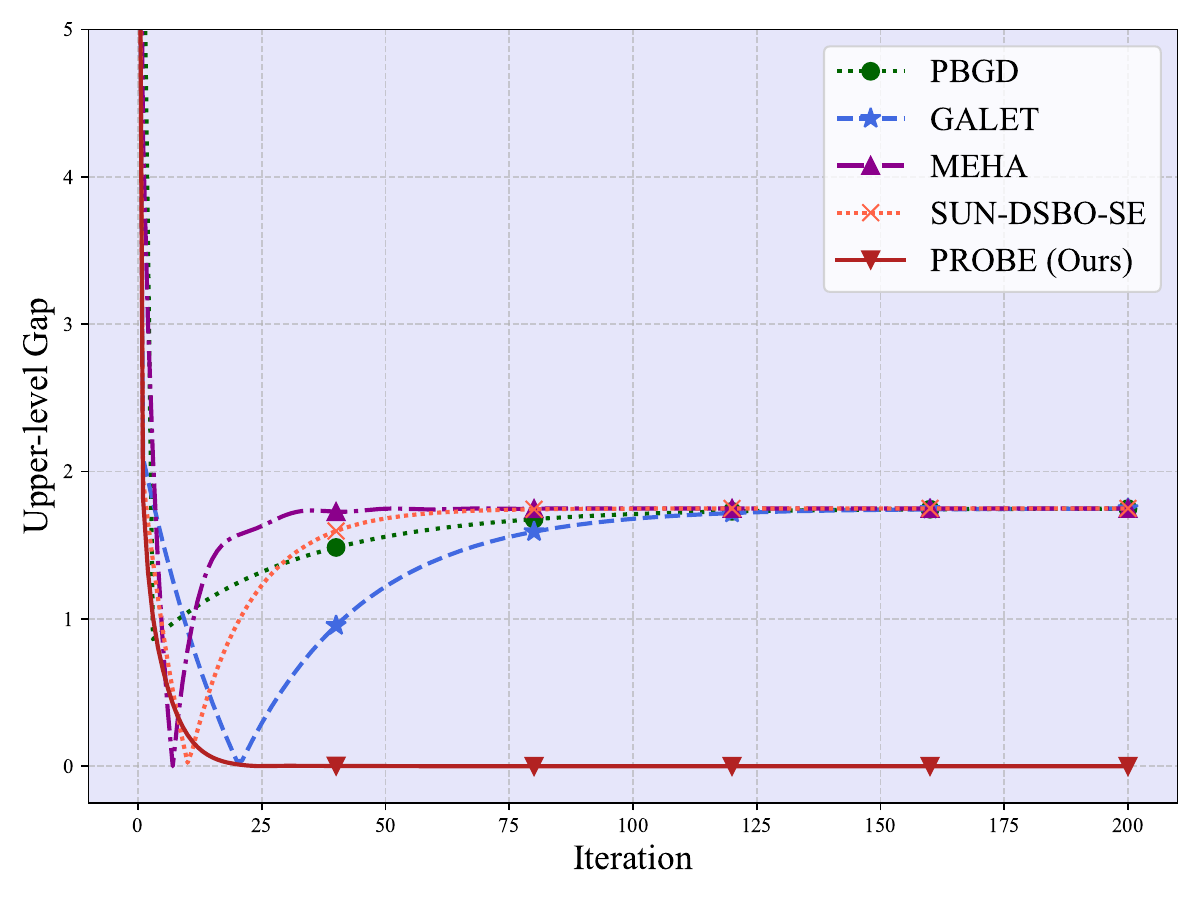}
        \vspace{-2em}
        \caption{Convergence gaps in (a).}
        \label{fig:gap}
    \end{subfigure}
    \caption{Example: $x,y\in\mathbb{R}$, $f(x,y) = (x-2)^2 + y^2$, $g(x,y) = \frac{1}{4}y^4 - \frac{1}{2}xy^2$.
    Fig.~\ref{fig:contour} shows the contour plot of $f(x,y)$ along with converging trajectories with initial points $(0,2)$ and $(3,0)$, and Fig.~\ref{fig:gap} plots $|f(x_t,y_t)-f^*|$ vs. iteration $t$, illustrating how far the convergent sequences are from solving the LLNC-BLO problem.
    Our \alg (Alg.~\ref{alg:main}) is the \textbf{only} method that \textbf{escapes the saddle point} and \textbf{finds true solutions}, compared to existing baselines methods: \textup{PBGD} \citep{shen2023penalty}, \textup{GALET} \citep{xiao2023generalized}, \textup{MEHA} \citep{liu2024moreau}, and \textup{SUN-DSBO-SE} \citep{ma2026sun}.}
    \label{fig:EXAMPLE}
    \vspace{-.2in}
\end{figure}

To address the limitations of FOSP-based lower-level surrogates, in this paper, we propose using {\bf second-order stationarity} as a surrogate for LLNC-BLO problems. 
Mathematically, second-order stationarity requires not only $\nabla_y g(x,y)=0$, but also that the Hessian with respect to $y$ is positive semidefinite (PSD), i.e., $\lambda_{\min}(\nabla_{yy}^2 g(x,y)) \ge 0$.
Our rationale for adopting this second-order-stationarity-based lower-level surrogate (SOS-LLS) is that a PSD Hessian enforces {\em local convexity} in the lower-level problem, thereby ruling out undesirable saddle points and guaranteeing at least local optimality with respect to $y$. 
As a concrete example, Fig.~\ref{fig:EXAMPLE} shows that our \alg method, based on SOS-LLS, successfully recovers true optimal solutions to the LLNC-BLO problem.

\begin{table}[t]
    \centering
    \caption{Comparison of \alg (Alg.~\ref{alg:main}) with state-of-the-art LLNC-BLO methods. 
    $\dagger$: \textup{BLO Setting} states the assumptions on the lower level of the LLNC-BLO.
    $\ddagger$: \textup{Surrogate Problem} refers to the relaxed surrogate of the lower level of the LLNC-BLO problem. Moreau envelope-based surrogates reduce to first-order stationarity under smoothness conditions \citep{ma2026sun}.
    }
    \label{tab:compare}
    \resizebox{0.9\textwidth}{!}{
    \begin{NiceTabular}{l c c c}
        \toprule
        \textbf{Algorithm} & \textbf{BLO Setting}$^\dagger$ & \textbf{Surrogate Problem}$^\ddagger$ & \textbf{Escape Saddle Points} \\
        \midrule
        PBGD & LLNC & First-order stationarity & \xmark \\
        GALET & LLNC & First-order stationarity & \xmark \\
        MEHA & LLNC & Moreau envelope & \xmark \\
        SUN-DSBO-SE & LLNC & Moreau envelope & \xmark \\
        \midrule
        \rowcolor{MyCrimson!20}
        \textbf{\alg (Ours)} & \textbf{LLNC} & \textbf{Second-order stationarity} & \textbf{\cmark} \\
        \bottomrule
    \end{NiceTabular}
    }
    \vspace{-1.5em}
\end{table}

Despite the significant advantage of avoiding undesirable saddle points, solving LLNC-BLO problems through the SOS-LLS reformulation remains largely underexplored, and both algorithm design and analysis are highly nontrivial. 
This is primarily due to two fundamental challenges: {\bf (i)} the SOS-LLS reformulation leads to a constrained problem that is inherently non-smooth, posing difficulties for algorithmic design; 
and {\bf (ii)} under the LLNC setting, the upper-level hypergradient is ill-defined, leaving no natural metric to quantify the upper-level stationarity gap.
In this paper, we address these challenges through the following key contributions:

\vspace{-0.75em}
\begin{list}{\labelitemi}{\leftmargin=1.5em \itemindent=-0.0em \itemsep=.1em}
    \item \textbf{Theoretical Foundations:} For the first time in the literature, we propose to incorporate second-order stationarity as the lower-level surrogate for LLNC-BLO problems, which allows to avoid undesirable saddle point solutions at the lower level without assuming the overly restrictive PL condition (cf. \Cref{tab:compare}).
    Building on the SOS-LLS reformulation, we introduce a new augmented convergence metric that leverages second-order stationarity to induce local curvature. 
    By progressively shrinking the augmentation parameter, the proposed metric recovers solutions to the original LLNC-BLO problem, thereby providing a principled foundation for both algorithm design and analysis.
    
    \item \textbf{Algorithmic Design and Analysis:} Based on the proposed SOS-LLS reformulation, we propose \alg (\underline{P}erturbed g\underline{r}adient alg\underline{o}rithm for \underline{b}ilevel probl\underline{e}m), a new algorithm that efficiently probes and guarantees escaping saddle points in the lower level of LLNC-BLO problems.
    Theoretically, we show that \alg achieves a finite-time convergence rate of $\mathcal{O}(T^{-\frac{2}{5}})$ at the upper level while guaranteeing local optimality at the lower level for solving LLNC-BLO problems, where $T$ denotes iteration counts.
    
    \item \textbf{Empirical Validation:} We validate the effectiveness of our \alg method through a dataset curation task for LLM fine-tuning and a meta-learning task, both of which can be formulated as a BLO problem.
    Our experimental results demonstrate that \alg outperforms existing baselines.
    We further conduct ablation studies to further reveal the significance of the key components in our proposed \alg method.
\end{list}

\section{Related Work}

In this section, we provide an overview on two related lines of research: (i) bilevel optimization (BLO), with a focus on methods that address BLO problems in the LLNC settings; and (ii) mechanisms for escaping saddle points in single-level nonconvex optimization theory for machine learning.

\textbf{1) Bilevel Optimization (BLO):} The study of BLO traces its roots back to as early as \cite{bracken1973mathematical, ye1995optimality}.
In recent years, BLO has received significant attention within the machine learning community, primarily driven by the growing necessity of solving nested-structured optimization problems in modern machine learning applications.
As mentioned earlier, most existing works in the BLO literature are based on the restrictive LLSC condition \citep{franceschi2018bilevel,ghadimi2018approximation,shaban2019truncated,arbel2021amortized,ji2021bilevel,yang2021provably,dagreou2022framework}, which significantly limits the practical applicability of the BLO framework.
Due to the limitations of LLSC, researchers have recently started to relax the LLSC assumption to the LLGC assumption \citep{cao2023projection,chen2023bilevel,jiang2023conditional,liu2023averaged,shen2024method}.
However, the convexity requirement in LLGC still excludes a wide range of practical scenarios.
The widening gap between the demand for solving general BLO problems in the LLNC setting and the limited theoretical understanding of this BLO setting motivates the studies of LLNC-BLO.
However, this area remains in its infancy, and existing works either impose strong structural assumptions or consider FOSP-based relaxations of the LLNC-BLO problem~\citep{kwon2023penalty, shen2023penalty, xiao2023generalized, chen2024finding, liu2024moreau, jiang2025correspondence, jiang2025discretization, ma2026sun}.
As shown in the illustrative example in \Cref{fig:EXAMPLE} and \Cref{tab:compare}, these methods fail to distinguish local minima from saddle points at the lower level, leading to solutions that are significantly suboptimal compared to the true optimum of the original LLNC-BLO problems.

\textbf{2) Escaping Saddle Points in Single-Level Nonconvex Optimization:} It is well-known that in the absence of convexity, traditional single-level optimization becomes significantly more challenging: finding a global optimum is NP-hard in general \citep{murty1987some}.
In fact, even finding a local minimum solution in nonconvex optimization is highly nontrivial.
One of the primary challenges stems from the ubiquity of saddle points, particularly in the landscape of high-dimensional objective functions \citep{dauphin2014identifying}, which is the typical regime where modern machine learning tasks reside.
Although second-order methods naturally tackle this issue by actively probing and escaping saddle points, their dimension-dependent convergence rates preclude them from practical large-scale applications.
In the machine learning literature, the question of how to escape saddle points in nonconvex optimization was first systematically studied in \cite{ge2015escaping}.
However, the convergence rate established in \cite{ge2015escaping} scales polynomially with the dimension.
Inspired by this breakthrough, a significant amount of research effort has been dedicated to designing algorithms that reduce the dimension dependence to scale at a polylog fashion \citep{agarwal2017finding,jin2017escape,allen2018neon2,fang2018spider,jin2018accelerated,liu2018adaptive,jin2021nonconvex,zhang2021escape}.
Another closely related line of research focuses on zeroth-order methods for escaping saddle points \citep{vlatakis2019efficiently, zhang2022zeroth, zhang2022faster, ren2023escaping}.
To our knowledge, the only works that incorporated saddle point escaping techniques in the BLO framework are \cite{huang2025efficiently, chen2025near}.
However, these studies are limited to the LLSC-based setting and only considered escaping saddle points at the upper level, which are {\em fundamentally different from} and {\em much simpler than} our setting, where we investigate saddle point escaping at the lower level for the LLNC setting.

\section{The Second-Order Stationarity-Based Lower-Level Surrogate}

In this section, we present the second-order stationarity-based lower-level surrogate reformulation for the LLNC-BLO problem.
We begin by introducing several standard smoothness conditions that are needed to formally define second-order stationary points:
\begin{assumption}[Smoothness of the Upper- and Lower-Level Objective Functions]\label{ass:smoothness}
    The upper- and lower-level objective functions $f$ and $g$ are twice differentiable, and there exist positive constants $\nu,\ell,\rho$, such that for any $z=(x,y)$ and $z'=(x',y')$, it holds that:
    \begin{align*}
        & |f(z)-f(z')| \le \nu \|z-z'\|, \,\,\,\,\,\, \|\nabla f(z)-\nabla f(z')\| \le \ell \|z-z'\|, \\
        & \|\nabla g(z)-\nabla g(z')\| \le \ell \|z-z'\|, \,\,\,\,\,\, \|\nabla^2 g(z)-\nabla^2 g(z')\| \le \rho \|z-z'\|,
    \end{align*}
    where $\|\cdot\|$ denotes Euclidean and spectral norm for vectors and matrices hereafter, respectively.
\end{assumption}
We note that the smoothness assumptions above are not only widely adopted in the BLO literature \citep{ghadimi2018approximation,ji2021bilevel,yang2021provably,xiao2023generalized}, but they have also been numerically observed and verified in large-scale modern machine learning models, such as LLMs \citep{castin2023smooth, malladi2023fine, li2024getting}.

Equipped with these smoothness properties, we now formally define stationary points, local minima, and saddle points for the lower-level of the LLNC-BLO problem.

\begin{definition}[First- and Second-Order Stationary Point]\label{def:stationary}
    Suppose \Cref{ass:smoothness} holds, and let $\lambda_{\min}(\cdot)$ denote the minimal eigenvalue of a matrix. For any given $x\in\mathbb{R}^p$ and any positive $\epsilon$,
    \begin{list}{\labelitemi}{\leftmargin=1.5em \itemindent=-0.0em \itemsep=-.2em}
        \item[1.] $y$ is a first-order stationary point (FOSP) if $\nabla_y g(x,y)=0$. It is an $\epsilon$-FOSP if $\|\nabla_y g(x,y)\| \le \epsilon$.
        \item[2.] $y$ is a second-order stationary point (SOSP) if $\nabla_y g(x,y)=0$ and $\lambda_{\min}(\nabla^2_{yy} g(x,y)) \ge 0$. It is an $\epsilon$-SOSP if $\|\nabla_y g(x,y)\| \le \epsilon$ and $\lambda_{\min}(\nabla^2_{yy} g(x,y)) \ge -\sqrt{\rho\epsilon}$.
    \end{list}
\end{definition}
We note that the definition of $\epsilon$-second-order stationary point here has also been adopted in the literature \citep{nesterov2006cubic,jin2017escape,jin2018accelerated,zhang2021escape}.

\smallskip
\begin{definition}[Local Minimum and Saddle Point]
    Suppose that $g$ is differentiable.
    For any given $x\in\mathbb{R}^p$, $y$ is a local minimum if $\nabla_y g(x,y)=0$ and there exists $\delta > 0$ such that $g(x,y) \le g(x,y')$ for all $y' \in \mathbb{B}_y(\delta)$, where $\mathbb{B}_y(\delta)$ denotes an open ball of radius $\delta$ centered at $y$.
    In contrast, $y$ is a saddle point if $\nabla_y g(x,y)=0$ and it is not a local minimum.
\end{definition}
When all saddle points are strict, i.e., $\lambda_{\min}(\nabla^2_{yy} g(x,y)) < 0$ holds for all saddle points, attaining a second-order stationary point is equivalent to reaching a local minimum.
In fact, this ``strict'' property has been extensively verified and documented for saddle points across various nonconvex landscapes (e.g., tensor decomposition or low-rank matrix recovery, see, e.g., \citep{ge2015escaping,bhojanapalli2016global,ge2016matrix,jin2017escape,sun2018geometric}, and see Appendix~\ref{sec:app-saddle} for more discussions.)
\footnote{While finding an SOSP does not theoretically preclude convergence to non-strict saddle points that could occur in highly overparameterized deep learning landscapes, it remains strictly far more advantageous than relying on FOSP. SOSP guarantees the evasion of all strict saddle points, including local maxima, by explicitly exploiting directions of negative curvature. Furthermore, because escaping non-strict saddle points generally requires computationally intractable higher-order derivatives, targeting an SOSP represents the strongest practically verifiable optimality condition. Empirically, the non-strict saddles satisfying SOSP conditions in deep networks often lie on flat manifolds that yield highly desirable generalization properties.}.

Given the NP-hardness of finding global optima for general nonconvex objective functions, it is standard practice to target local optimal solutions instead (i.e., points that satisfy the necessary conditions for global optimality).
However, as previously noted, simply relaxing the lower-level optimality constraint to merely finding an FOSP~\citep{shen2023penalty, xiao2023generalized, liu2024moreau, ma2026sun} is problematic, as it introduces the risk of converging to undesirable saddle points.
Fortunately, under the strict saddle assumption, identifying an SOSP yields local minima.
Thus, we propose to reformulate the LLNC-BLO problem with the following SOSP-based surrogate:
\begin{equation}\label{eq:surrogate}
    \min_{x\in\mathbb{R}^p,y\in\mathbb{R}^q} f(x,y), \hspace{2em}\text{subject to } \nabla_y g(x,y) = 0, \,\,\, \lambda_{\min}(\nabla^2_{yy} g(x,y)) \ge 0. 
\end{equation}

\section{The Proposed \alg Algorithm}

Based on the SOSP-based lower-level surrogate for LLNC-BLO problems, in this section, we will investigate how to design an efficient algorithm for solving  in Eq.~\ref{eq:surrogate}.
We start with defining an appropriate metric to measure the convergence error in designing algorithms for solving Eq.~\ref{eq:surrogate}.

Suppose that $(\tilde{x}, \tilde{y})$ is an $\epsilon'$-SOSP for $g(\tilde{x},\cdot)$ for some positive $\epsilon'$.
According to \Cref{def:stationary}, we have that $\nabla_{yy}^2 g(\tilde{x},\tilde{y}) + (\sqrt{\rho\epsilon'} + \sigma) I_q$ is positive definite for any arbitrarily small positive value $\sigma>0$ (which can be adaptively selected according to $\epsilon'$).
It then follows that the augmented objective function $g(\tilde{x}, y) + \frac{\mu}{2}\|y-\tilde{y}\|^2$ is locally $\frac{\sigma}{2}$-strongly convex within a sufficiently small neighborhood of $\tilde{y}$, where the parameter $\mu \triangleq \mu(\epsilon', \sigma) := \sqrt{\rho\epsilon'} + \sigma$.
By leveraging this local quadratic curvature, we can define the following $\mu$-\textit{augmented implicit function} with a small positive radius $r'=\frac{\sigma}{\rho}$:
\begin{align*}
    \Phi_{\mu}(\tilde{x}) := f(\tilde{x}, y_{\mu}^*(\tilde{x})), \,\,\, \text{where } y_{\mu}^*(\tilde{x}):=\argmin_{y\in\mathbb{B}_{\tilde{y}}(r')}\{ g(\tilde{x}, y) + \frac{\mu}{2}\|y-\tilde{y}\|^2 \},
\end{align*}
where the existence and uniqueness of $y_{\mu}^*(\tilde{x})$ can be guaranteed by the local strong convexity.
Indeed, the introduction of $\mu$-regularization restores a locally strongly convex curvature, thereby enabling the use of existing techniques from the LLSC-BLO setting, such as hypergradient-based methods \citep{ghadimi2018approximation,grazzi2020iteration,ji2021bilevel,huang2025efficiently}.
Consequently, following from the Implicit Function Theorem (IFT), the gradient of $\Phi_{\mu}(\tilde{x})$, referred to as the $\mu$-augmented hypergradient in this paper, can be computed as:
\begin{align*}
    \nabla \Phi_{\mu}(\tilde{x}) & = \nabla_x f(\tilde{x}, y_{\mu}^*(\tilde{x})) + \frac{d y_{\mu}^*(\tilde{x})}{dx} \nabla_y f(\tilde{x}, y_{\mu}^*(\tilde{x})) \\
    & = \nabla_x f(\tilde{x}, y_{\mu}^*(\tilde{x})) - \nabla_{xy}^2 g(\tilde{x}, y_{\mu}^*(\tilde{x})) \left[ \nabla_{yy}^2 g(\tilde{x}, y_{\mu}^*(\tilde{x})) + \mu I_q \right]^{-1} \nabla_y f(\tilde{x}, y_{\mu}^*(\tilde{x})).
\end{align*}

The $\mu$-augmented hypergradient naturally gives rise to the following convergence metric:

\begin{definition}[$\mu$-Augmented Hypergradient-Based Convergence Metric]
    Suppose $y$ is an $\epsilon'$-SOSP for a given $x$ (as defined in Def.~\ref{def:stationary})  and all saddle points are strict.
    For any positive $\epsilon, \mu=\sqrt{\rho\epsilon'} + \sigma$, $(x,y)$ is said to be an $(\epsilon,\mu)$-stationary solution to Eq.~\ref{eq:surrogate} if it satisfies: $\Pi(x,y) := \|\nabla \Phi_{\mu}(x)\|^2 + \epsilon' \le \epsilon$.
\end{definition}

With the formal definition of this convergence metric and the SOSP-based lower-level surrogate for the LLNC-BLO problem, the following key question naturally arises: 

\begin{tcolorbox}[left=1.2pt,right=1.2pt,top=1.2pt,bottom=1.2pt]
\textbf{Question}: Is the $\mu$-augmented hypergradient a good estimation of the ``steepest descent update direction'' of the original LLNC-BLO problem in terms of $x$? More importantly, how to control the $\mu$-parameter in the SOSP-based surrogate in order to ensure that we still converge to a good solution to the original LLNC-BLO problem with a lower-level local optimality guarantee?
\end{tcolorbox}

Although this question is challenging to answer in the highly complex LLNC setting, one may obtain some high-level qualitative understanding by measuring the ``bias'' of the $\mu$-augmented hypergradient when a hypergradient is well-defined (note that in LLNC-BLO, a hypergradient could be ill-posed if the Hessian of the lower-level problem is rank-deficient).
More specifically, note that if the sequence generated by an algorithm is able to escape strict saddle points, it typically reaches a local minimum, where the lower-level objective function usually exhibits sufficient local curvature in a bounded local neighborhood, so that its landscape can be bounded from below by a quadratic function.
Consider a scenario in which $\nabla_{yy}^2 g(\tilde{x}, y_{\mu}^*(\tilde{x})) \succeq \mu_0 I_q$ holds for some $\mu_0 > 0$, so that a hypergradient is well-defined. 
Due to the fact that $A^{-1} - B^{-1} = A^{-1}(B - A)B^{-1}$, we can bound the discrepancy between the $\mu$-augmented hypergradient and the true hypergradient as follows:
\begin{align*}
    \|\nabla\Phi_\mu(\tilde{x}) - \nabla\Phi_0(\tilde{x})\| 
    & = \left\| \nabla_{xy}^2 g \left( \left[ \nabla_{yy}^2 g \right]^{-1} (\mu I_q) \left[ \nabla_{yy}^2 g + \mu I_q \right]^{-1} \right) \nabla_y f \right\| \\
    & \le \|\nabla_{xy}^2 g\| \cdot \left\| \left[ \nabla_{yy}^2 g \right]^{-1} \right\| \cdot \mu \cdot \left\| \left[ \nabla_{yy}^2 g + \mu I_q \right]^{-1} \right\| \cdot \|\nabla_y f\| \\
    & \overset{(a)}{\le} \ell \cdot \frac{1}{\mu_0} \cdot \mu \cdot \frac{1}{\mu_0 + \mu} \cdot \nu \le \frac{\ell \nu}{\mu_0^2} (\sqrt{\rho\epsilon'} + \sigma) = \mathcal{O}(\sqrt{\epsilon'}+\sigma),
\end{align*}
where $(a)$ follows from \Cref{ass:smoothness}.
This shows that the $\mu$-augmented hypergradient incurs an approximation bias that scales at most linearly with both the square root of the SOSP violation $\sqrt{\epsilon'}$ and the augmentation parameter $\sigma$.
To mitigate the impact of such ``bias'', we can adaptively shrink both $\epsilon_t$ and $\sigma_t$ in the design of our algorithm to progressively refine the lower-level solutions.
Specifically, by using a dynamic sequence $\{\mu_t=\sqrt{\rho\epsilon_t} + \sigma_t\}_t$ that approaches $0^+$ as $t\to\infty$, the ``bias'' can be gradually eliminated to induce convergence to a solution with vanishing bias to the original LLNC-BLO problem.
Based on this intuition, we propose \alg (\underline{P}erturbed g\underline{r}adient alg\underline{o}rithm for \underline{b}ilevel probl\underline{e}m), which is summarized in Algorithms~\ref{alg:main} and \ref{alg:PGD}.
In what follows, we will first describe the basic idea and the key steps of \algn.
The theoretical convergence analysis of \alg will be provided in \Cref{sec:analysis}.

\begin{algorithm}[t!]
    \caption{The overall \alg algorithmic framework.}\label{alg:main}
    \DontPrintSemicolon
    \textbf{Input:} Iteration $T, K_t, N_t$, initial points $x_0, y_0, v_0$, step-sizes $\alpha_t,\beta_t$, aggregation parameters $\mu_t$, and PGD parameters $r_t, \varrho_t, \Delta_{G,t}, \Delta_{K,t}$, where $t\in\{0, \dotsc, T-1\}$\;
    \For{$t = 0, \dots, T-1$}{
        Update $y_{t+1} \leftarrow \text{PGD}\big( K_t, x_t, y_t, \beta_t, r_t, \varrho_t, \Delta_{G,t}, \Delta_{K,t} \big)$\;
        Solve $\big( \nabla_{yy}^2 g(x_t, y_{t+1}) + \mu_t I_q \big) v = \nabla_y f(x_t, y_{t+1})$ from $v_t$ by $N_t$-step Conjugate Gradient with a warm start to get $v_{t+1}$\;
        Update $x_{t+1} \leftarrow x_t \!-\! \alpha_t\widehat{\nabla}\Phi_{\mu_t}(x_t)$, where $\widehat{\nabla}\Phi_{\mu_t}(x_t) = \nabla_x f(x_t, y_{t+1}) \!-\! \nabla_{xy}^2 g(x_t, y_{t+1}) v_{t+1}$\;
    }
    \Return $(x_T, y_T)$\;
\end{algorithm}
\begin{algorithm}[t!]
    \caption{The perturbed gradient descent subroutine PGD($K, x, y^0, \beta, r, \varrho, \Delta_G, \Delta_K$).}\label{alg:PGD}
    \DontPrintSemicolon
    $\tau \leftarrow -\Delta_K - 1$\;
    \For{$k = 0, \dots, K-1$}{
        \If{$\|\nabla_y g(x, y^k)\| \le \varrho$ \textbf{and} $k - \tau > \Delta_K$}{
            $\tilde{y} \leftarrow y^k, \,\,\,\,\,\, \tau \leftarrow k$ \tcp*{Store anchor point \& Update timestamp}
            $\xi \sim \text{Unif}(\mathbb{B}_{\bf{0}}(r)), \,\,\,\,\,\, y^k \leftarrow y^k + \xi$ \tcp*{Inject uniform noise}
        }
        \If{$k - \tau = \Delta_K$ \textbf{and} $g(x, y^k) - g(x, \tilde{y}) > -\Delta_G$}{
            \Return $\tilde{y}$ \tcp*{Return confirmed SOSP}
        }
        $y^{k+1} \leftarrow y^k - \beta \nabla_y g(x, y^k)$\;
    }
\end{algorithm}

\textbf{1) Basic Idea:} \alg employs a double-loop framework,
where the inner loop is dedicated to finding an $\epsilon_t$-SOSP of the lower-level objective $g(x_t, \cdot)$ for a given upper-level variable $x_t$ in step $t$, while
the outer loop utilizes this approximation to construct a $\mu_t$-augmented hypergradient, which guides the update of the upper-level variable $x_t$ towards the minimum of $\Phi_{\mu_t}(\cdot)$ with a shrinking $\mu_t$.

\textbf{2) SOSP Approximation via PGD (Inner-Loop):} The inner-loop (Algorithm~\ref{alg:PGD}) employs Perturbed Gradient Descent (PGD, \cite{jin2017escape}) to identify SOSP approximations.
In order to escape saddle points, PGD monitors the gradient: if $\|\nabla_y g(x, y^k)\| \le \varrho$, it injects a uniform noise $\xi \sim \text{Unif}(\mathbb{B}_{\bf{0}}(r))$ for a more aggressive exploration, where $r$ denotes the radius of a ball and will be selected later.
After several steps, the algorithm confirms that the iterate has reached an approximate SOSP if the function value has an insufficient reduction (i.e., $g(x, y^k) - g(x, \tilde{y}) > -\Delta_G$).
Finally, the inner-loop returns the identified $\epsilon_t$-SOSP to the outer-loop at each outer iteration $t$ by adaptively and properly selecting parameters ($K_t, \varrho_t, \Delta_{G,t}, \Delta_{K,t}$).

\textbf{3) $\mu_t$-Hypergradient Estimation and Update (Outer-Loop):} Building upon the lower-level solution $y_{t+1}$ identified by the inner-loop, the outer-loop (\alg, Algorithm~\ref{alg:main}) proceeds to update the upper-level variable.
In the $\mu_t$-augmented hypergradient, the primary computational bottleneck lies in estimating the Hessian inverse $[\nabla_{yy}^2 g(x_t, y_{t+1}) + \mu_t I_q]^{-1}$.
We address this by employing the Conjugate Gradient (CG) method \citep{grazzi2020iteration,ji2021bilevel}, an inversion-free approach that only queries Hessian-vector products.
This allows \alg to efficiently compute the adjoint variable $v_{t+1}$ within $N_t$ iterations at every time slot $t$.
Furthermore, we equip the $N_t$-step CG with a warm start mechanism (conducting CG in the $t$-round beginning from the last output $v_t$), which significantly accelerates convergence to reach a faster rate.
Finally, using the estimated $\mu_t$-augmented hypergradient $\widehat{\nabla}\Phi_{\mu_t}(x_t)$, we perform a gradient descent step to update upper-level variable to $x_{t+1}$.
As will be mentioned in the next section, to ensure the convergence to a good solution of the original LLNC-BLO problem, we dynamically increase and decrease the parameters $K_t$ and $\sigma_t$, respectively, to shrink $\mu_t = \sqrt{\rho \epsilon_t} + \sigma_t$ toward $0$ at an appropriate rate.

\section{Finite-Time Convergence Analysis of \alg}\label{sec:analysis}

In this section, we will conduct theoretical convergence analysis of our proposed \alg algorithm.
Toward this end, we first restate the iteration complexity performance guarantee of the PGD method~\citep{jin2017escape}, which is adapted in the context of LLNC-BLO in Algorithm~\ref{alg:PGD}.
\begin{lemma}[Iteration Complexity of PGD]\label{lemma:PGD}
    Under \Cref{ass:smoothness}, with hyperparameter choices in Appendix~\ref{sec:app-proof}, for any $\delta'\in(0,1), \epsilon_t\le\ell^2/\rho, \Delta_{g,t} \ge g(x_t,y_t)-\inf_y g(x_t,y)$, with probability at least $1-\delta'$, \Cref{alg:PGD} outputs an $\epsilon_t$-SOSP with iterations no more than $K_t = \mathcal{O}\big( \frac{\ell\Delta_{g,t}}{\epsilon_t^2}\log^4\big(\frac{q\ell\Delta_{g,t}}{\epsilon_t^2\delta'}\big) \big)$.
\end{lemma}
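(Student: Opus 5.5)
The plan is to reduce the statement to the single-level PGD guarantee of \citet{jin2017escape}, applied to $h(\cdot):=g(x_t,\cdot)$ with $x_t$ held fixed during the inner loop.

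First, I will check that $h$ meets the hypotheses of that analysis. By \Cref{ass:smoothness}, $\nabla_y g(x_t,\cdot)$ is $\ell$-Lipschitz and $\nabla^2_{yy} g(x_t,\cdot)$ is $\rho$-Lipschitz, because restricting to $z=(x_t,y)$ and $z'=(x_t,y')$ gives $\|z-z'\|=\|y-y'\|$, and the $yy$-block of a matrix has spectral norm at most that of the full matrix. The quantity $\Delta_{g,t}\ge h(y_t)-\inf_y h$ plays the role of the initial suboptimality $f(x_0)-f^*$ in \citet{jin2017escape}, and $q$ plays the role of the dimension $d$.

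Next, I will fix the hyperparameters as in \citet{jin2017escape}, with $\epsilon$ replaced by $\epsilon_t$:
\[
\chi = 3\max\{\log(q\ell\Delta_{g,t}/(c\epsilon_t^2\delta')),4\},\quad
\beta_t=c/\ell,\quad
\varrho_t = \sqrt{c}\,\epsilon_t/\chi^2,\quad
r_t=\sqrt{c}\,\epsilon_t/(\chi^2\ell),
\]
\[
\Delta_{K,t}=\chi\ell/(c^2\sqrt{\rho\epsilon_t}),\quad
\Delta_{G,t}=\tfrac{c}{\chi^3}\sqrt{\epsilon_t^3/\rho}.
\]
The argument then follows Jin's two-phase accounting.

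\emph{Large-gradient phase.} While $\|\nabla h(y^k)\|>\varrho_t$, the descent lemma gives a decrease of at least $\beta\varrho_t^2/2$ per step.

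\emph{Small-gradient phase.} When $\|\nabla h(y^k)\|\le\varrho_t$ and no perturbation has occurred within the last $\Delta_{K,t}$ steps, a perturbation is added. If $\lambda_{\min}(\nabla^2 h(\tilde y))<-\sqrt{\rho\epsilon_t}$, the coupling/``thin pancake'' argument shows that with probability at least $1-\frac{q\ell}{\sqrt{\rho\epsilon_t}}e^{-\chi}$ the function decreases by at least $\Delta_{G,t}$ within $\Delta_{K,t}$ steps. The perturbation itself raises $h$ by at most $\ell r_t\varrho_t+\ell r_t^2/2\le\Delta_{G,t}/2$ or so.

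The termination test in \Cref{alg:PGD} is the main place where the adaptation departs from Jin's version. The subroutine stops when the decrease after $\Delta_{K,t}$ steps is less than $\Delta_{G,t}$. I need the returned anchor $\tilde y$ to be an $\epsilon_t$-SOSP. Its gradient is at most $\varrho_t\le\epsilon_t$ by the perturbation trigger. For the curvature, I argue by contraposition of the escape lemma: on the high-probability event, a strict-saddle anchor would have produced a decrease of at least $\Delta_{G,t}$. So, except on a failure event, an insufficient decrease certifies $\lambda_{\min}\ge-\sqrt{\rho\epsilon_t}$.

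For the counting, the total function decrease is bounded by $\Delta_{g,t}$. Hence the number of large-gradient steps is at most $2\Delta_{g,t}/(\beta\varrho_t^2)=O(\ell\Delta_{g,t}\chi^4/\epsilon_t^2)$. The number of successful escape episodes is at most $\Delta_{g,t}/(\Delta_{G,t}/2)$, each costing $\Delta_{K,t}$ steps, for a total of $O(\ell\Delta_{g,t}\chi^4/\epsilon_t^2)$. The condition $\epsilon_t\le\ell^2/\rho$ guarantees $\sqrt{\rho\epsilon_t}\le\ell$, so this episode count is dominated by the same expression. Summing gives $K_t=O\big(\frac{\ell\Delta_{g,t}}{\epsilon_t^2}\chi^4\big)$, and $\chi^4$ is exactly the stated $\log^4$ factor.

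Finally, I take a union bound over at most $K_t/\Delta_{K,t}$ perturbation episodes. Each fails with probability at most $\frac{q\ell}{\sqrt{\rho\epsilon_t}}e^{-\chi}$, and the choice of $\chi$ makes the total failure probability at most $\delta'$.

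I expect the main obstacle to be the escape lemma, i.e., the two-coupled-sequences argument showing that the stuck region along the most negative eigendirection has width at most about $r_t/\sqrt q$ times a small constant. I intend to import it verbatim from \citet{jin2017escape}, since nothing in it depends on $x$. The only genuinely new verification is the early-return/anchor logic described above.
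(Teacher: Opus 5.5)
Your proposal is correct and follows the paper's route. The paper gives no separate argument: it restates Theorem~3 of \cite{jin2017escape} for $g(x_t,\cdot)$ with $x_t$ frozen and with the same hyperparameters, and your reconstruction of that proof (restricted $\ell$/$\rho$-smoothness, two-phase decrease accounting, the contrapositive certification of the anchor, and the union bound via the choice of $\chi$) is faithful. One small correction: the early-return test in \Cref{alg:PGD} is not a departure from Jin et al., since their Algorithm~2 contains exactly this check and their proof of Theorem~3 already makes your contrapositive argument. So once you verify that $g(x_t,\cdot)$ is $\ell$-smooth and $\rho$-Hessian-Lipschitz, the result can be imported wholesale.
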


\vspace{-.1in}
We note that, besides PGD, several algorithms have been proposed to escape saddle points while maintaining nearly dimension-free convergence rates \citep{jin2018accelerated, jin2021nonconvex, zhang2021escape}.
While these approaches offer improvements in iteration complexity to some degree (from $\epsilon_t^{-2}$ to $\epsilon_t^{-1.75}$ at most), they typically introduce additional algorithmic modules, such as Negative Curvature Finding (NCF) and Nesterov-style acceleration, which significantly complicate the implementation of the inner-loop algorithm.
Moreover, rather than offering a ``last-iteration'' convergence guarantee, these accelerated methods only guarantee an approximate SOSP being visited at some point within the trajectory. This renders them ill-suited for algorithm designs for LLNC-BLO, which requires a terminal SOSP estimate.
We are now ready to state the main convergence result for \alg.
\begin{theorem}\label{thm:main}
    By selecting $K_t = \widetilde{\Theta}\big((t+1)^{\frac{8}{5}}\big)$, $\sigma_t=\Theta\big((t+1)^{-p}\big)$ and $\alpha_t=\Theta\big((t+1)^{-q}\big)$ with $p\in(0,\frac{1}{3}), 3p \le q < 1$,
    for any $\delta\in(0,1)$, with probability at least $1-\delta$, the sequence $\{x_t,y_t\}_t$ generated by \Cref{alg:main} satisfies the following convergence guarantee: $\min_{\frac{T}{2}\le t\le T} \Pi(x_t, y_t) = \mathcal{O}( \frac{1}{T^{1-q}} + \frac{1}{T^{2p}} + \frac{1}{T^{\frac{4}{5}}} )$.
    By selecting $(p,q)=(\frac{1}{5},\frac{3}{5})$, we have $\min_{\frac{T}{2}\le t\le T} \Pi(x_t, y_t) = \mathcal{O}\big( T^{-\frac{2}{5}} \big)$.
\end{theorem}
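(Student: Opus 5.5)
The plan is to run the standard double-loop bilevel analysis with a time-varying potential, using $\Phi_{\mu_t}$ as the upper-level surrogate at iteration $t$. I will control three error sources separately: the inner-loop SOSP accuracy $\epsilon_t$, the CG error $\|v_{t+1}-v_t^*\|$ with $v_t^*=(\nabla_{yy}^2 g(x_t,y_{t+1})+\mu_t I_q)^{-1}\nabla_y f(x_t,y_{t+1})$, and the drift of $\Phi_{\mu_t}$ as both $x_t$ and $\mu_t$ change.

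\textbf{Step 1 (inner loop and probability).} Apply \Cref{lemma:PGD} at every $t$ with $\delta'=\delta/T$ and take a union bound. With $K_t=\widetilde{\Theta}((t+1)^{8/5})$, and $\Delta_{g,t}$ bounded uniformly (from Lipschitz continuity of $g$ along the slowly moving $x_t$ and the warm-started $y_t$), we get that $y_{t+1}$ is an $\epsilon_t$-SOSP of $g(x_t,\cdot)$ with $\epsilon_t=\widetilde{\mathcal{O}}((t+1)^{-4/5})$. This already produces the $T^{-4/5}$ term in $\Pi$ and gives $\sqrt{\rho\epsilon_t}=\mathcal{O}(t^{-2/5})\lesssim\sigma_t$ since $p<1/3$. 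Hence $\mu_t=\Theta(\sigma_t)$, and $\nabla_{yy}^2g+\mu_tI_q\succeq\sigma_tI_q$ locally, so the $\mu_t$-augmented problem is $\sigma_t$-strongly convex on $\mathbb{B}(r_t')$.

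\textbf{Step 2 (LLSC-type constants with vanishing curvature).} Using \Cref{ass:smoothness}, I will show that on the local ball $y_{\mu_t}^*$ is $\mathcal{O}(\ell/\sigma_t)$-Lipschitz in $x$. Consequently $\nabla\Phi_{\mu_t}$ is $L_t=\mathcal{O}(\sigma_t^{-3})$-smooth, and $|\Phi_{\mu_{t+1}}(x)-\Phi_{\mu_t}(x)|=\mathcal{O}(|\mu_t-\mu_{t+1}|/\sigma_t^2)$. I will also bound the hypergradient error: $\|\widehat\nabla\Phi_{\mu_t}(x_t)-\nabla\Phi_{\mu_t}(x_t)\|\lesssim \|v_{t+1}-v_t^*\|+\sigma_t^{-2}\|y_{t+1}-y^*_{\mu_t}(x_t)\|$. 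The second term is small because an $\epsilon_t$-FOSP inside a strongly convex region lies within $\epsilon_t/\sigma_t$ of the minimizer.

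\textbf{Step 3 (warm-start CG recursion).} CG on a matrix with condition number $\kappa_t=\mathcal{O}(\ell/\sigma_t)$ contracts the error by a factor $\theta_t<1$ after $N_t$ steps. The target shift satisfies $\|v_{t+1}^*-v_t^*\|\lesssim\sigma_t^{-2}(\|x_{t+1}-x_t\|+\|y_{t+2}-y_{t+1}\|+|\mu_{t+1}-\mu_t|)$, which yields a recursion for $e_t=\|v_{t+1}-v_t^*\|$. This is the main obstacle: $y_{t+1}$ and $y_{t+2}$ may be SOSPs in different basins, so the targets can jump. I would first try restricting to the window $t\ge T/2$ and arguing, from the PGD returning its anchor and from the small step $\alpha_t$, that the returned points stay in one local basin (roughly, $\|y_{t+2}-y_{t+1}\|=\mathcal{O}(\alpha_t\sigma_t^{-1})$). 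If that fails, I would fall back on bounding $e_t$ crudely by $\nu/\sigma_t$ and absorbing it using enough CG steps $N_t$.

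\textbf{Step 4 (descent and telescoping).} The descent lemma gives
\[
\Phi_{\mu_t}(x_{t+1})\le\Phi_{\mu_t}(x_t)-\tfrac{\alpha_t}{2}\|\nabla\Phi_{\mu_t}(x_t)\|^2+\tfrac{\alpha_t}{2}\|\widehat\nabla-\nabla\|^2+\tfrac{L_t\alpha_t^2}{2}\|\widehat\nabla\|^2 .
\]
The condition $q\ge3p$ ensures $L_t\alpha_t=\mathcal{O}(1)$. I then add the $\mu$-drift, telescope over $t\in[T/2,T]$ using the boundedness of $f$ on the trajectory, and divide by $\sum\alpha_t\asymp T^{1-q}$. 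This gives $\mathcal{O}(T^{-(1-q)})$ from the telescoped potential, while the bias and drift terms contribute $\mathcal{O}(\sigma_T^2)=\mathcal{O}(T^{-2p})$, the latter obtained after absorbing the Step 2 drift in the form $\sum|\sigma_t-\sigma_{t+1}|/\sigma_t^2$. Adding $\epsilon_t$ gives the stated bound. Choosing $(p,q)=(1/5,3/5)$ balances $1-q=2p=2/5$, yielding $\mathcal{O}(T^{-2/5})$.
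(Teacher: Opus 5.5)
The gap is in the $\mu$-drift. Your overall skeleton matches the paper's: a descent lemma with $L_t=\mathcal{O}(\sigma_t^{-3})$ and $q\ge 3p$, a CG error recursion, a $\mu$-drift term, telescoping over $[T/2,T]$, and division by $\sum_t\alpha_t\asymp T^{1-q}$. But the drift bound in Step~2 is too weak for the claimed rate, and the bookkeeping in Step~4 hides this.

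Since $\frac{\sigma_t-\sigma_{t+1}}{\sigma_t^2}\asymp\frac{1}{\sigma_{t+1}}-\frac{1}{\sigma_t}$, the sum $\sum_{t=T/2}^{T}|\sigma_t-\sigma_{t+1}|/\sigma_t^2$ telescopes to $\Theta(T^{p})$. After dividing by $T^{1-q}$ it contributes $T^{p+q-1}$, not $\mathcal{O}(\sigma_T^2)$. At $(p,q)=(\frac15,\frac35)$ this is $T^{-1/5}$. Keeping it within $\mathcal{O}(T^{-2p})$ would require $q\le 1-3p$, hence $p\le\frac16$, so this bookkeeping gives at best $T^{-1/3}$.

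The missing idea is the paper's Part~$\mathcal{A}$. Both $y^*_{\mu_t}(x_t)$ and $y^*_{\mu_{t+1}}(x_t)$ minimize $g(x_t,\cdot)+\frac{\mu}{2}\|\cdot-y_{t+1}\|^2$ with the \emph{same} anchor $y_{t+1}$. Subtracting the two first-order conditions and using local strong convexity gives $\|y^*_{\mu_{t+1}}(x_t)-y^*_{\mu_t}(x_t)\|\lesssim\sigma_t^{-1}|\mu_{t+1}-\mu_t|\,\|y^*_{\mu_t}(x_t)-y_{t+1}\|$. Because $y_{t+1}$ is $\epsilon_t$-stationary, $\|y^*_{\mu_t}(x_t)-y_{t+1}\|\lesssim\epsilon_t/\sigma_t$. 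With $\epsilon_t=\sigma_t^4/\rho$ this yields $|\Phi_{\mu_{t+1}}(x_t)-\Phi_{\mu_t}(x_t)|=\mathcal{O}(\sigma_t^2|\mu_t-\mu_{t+1}|)=\mathcal{O}((t+1)^{-3p-1})$, which is summable. Even the crude ball-radius bound $\|y^*_{\mu_t}(x_t)-y_{t+1}\|=\mathcal{O}(\sigma_t)$ would give an $\mathcal{O}(|\mu_t-\mu_{t+1}|)$ drift, which already suffices. The $\sigma_t^{-2}|\mu_t-\mu_{t+1}|$ scaling you wrote is correct for the adjoint, $\|v^*_{\mu_{t+1}}(x_t)-v^*_{\mu_t}(x_t)\|=\mathcal{O}((t+1)^{p-1})$, where it is harmless inside the contracted CG recursion. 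It is not correct for $\Phi$ itself.

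Two further remarks. First, your bias term $\sigma_t^{-2}\|y_{t+1}-y^*_{\mu_t}(x_t)\|\lesssim\epsilon_t\sigma_t^{-3}$ is $\mathcal{O}(\sigma_t)$ only when $\epsilon_t\lesssim\sigma_t^4$. This is why the appendix couples $\epsilon_t=\sigma_t^4/\rho$ with $K_t=\widetilde\Theta((t+1)^{8p})$. With $K_t$ frozen at $(t+1)^{8/5}$, your argument delivers the stated bound only for $p\le\frac15$, and for $p\ge\frac{4}{15}$ the term does not even vanish. This is fine for the advertised $(\frac15,\frac35)$.

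Second, your CG fallback is sound and simpler than the paper's route. Take $N_t=\widetilde\Theta(\sqrt{\ell/\sigma_t})$ with a $\log(1/\sigma_t)$ factor; then the Euclidean CG contraction is $\mathcal{O}(\sigma_t^2)$. So $\|v_{t+1}-v_t^*\|=\mathcal{O}(\sigma_t)$ even if the target jumps by $\mathcal{O}(\nu/\sigma_t)$, and the Lyapunov coupling $V_t=\Phi_{\mu_t}(x_t)+M\alpha_tE_t$ is no longer needed. The paper instead bounds $\|v^*_{\mu_{t-1}}(x_{t-1})-v^*_{\mu_{t-1}}(x_t)\|$ by treating $v^*_\mu(\cdot)$ as a single $\Theta(\sigma^{-3})$-Lipschitz branch. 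In doing so it silently ignores that the anchor moves from $y_t$ to $y_{t+1}$.

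However, the basin issue is not confined to Step~3. $\Phi_{\mu_t}$ is anchored at $y_{t+1}$ and $\Phi_{\mu_{t+1}}$ at $y_{t+2}$, so the telescoping in Step~4 needs these two functions to agree at $x_{t+1}$ up to summable errors. That is exactly your unproven ``same basin'' claim. The fallback does not supply it. The paper's proof, which applies the descent lemma to a single $\Phi_{\mu_{t+1}}$ between $x_t$ and $x_{t+1}$, does not address it either.
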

\textit{Proof Sketch.} Our convergence analysis is organized in three key steps (proof details of \Cref{thm:main} are relegated to Appendix~\ref{sec:app-proof} due to limited space):
\vspace{-0.5em}
\begin{list}{\labelitemi}{\leftmargin=1.5em \itemindent=-0.0em \itemsep=-.2em}
    \item \textbf{Step 1) Bounding the Variable Drifts:} We first control the tracking errors of the lower-level variable $y$ and the adjoint multiplier $v$.
    A major challenge here is decomposing and bounding several intertwined gaps: the inner-iteration solver error (i.e., taking $y$ for example hereafter, $\|y_{t+1} - y_{\mu_t}^*(x_t) \|$), and the parameter-induced shift across varying $\mu$ (i.e., $\|y_{\mu_{t+1}}^*(x_t) - y_{\mu_t}^*(x_t)\|$).
    \item \textbf{Step 2) Controlling the Augmented Implicit Function Shift:} Next, we analyze the dynamics of $\Phi_{\mu_{t+1}}(x_{t+1}) - \Phi_{\mu_t}(x_t)$, which is challenging due to simultaneous updates in both the variable $x$ and the parameter $\mu$.
    We decompose this shift into two components: (i) the contribution from the dynamics of $x$, which can be controlled using the smoothness of the augmented implicit function and, consequently, the descent lemma; and (ii) the contribution induced by the drift in $\mu$, which can be converted to the drift in $y$ that has already been controlled in the previous step.
    \item \textbf{Step 3) Lyapunov Analysis and Telescoping:} Finally, building upon the aforementioned results, we construct a dynamic Lyapunov function to decouple the descent of $\Phi_{\mu_t}$ from the tracking errors.
    By using a telescoping sum, we show that all error terms are bounded by the sum of a constant and a convergent series, thereby establishing the finite-time convergence rate.\qed
\end{list}

Three remarks on \Cref{thm:main} are in order:
{\bf First}, to our knowledge, this work is the \textit{first} that incorporates second-order stationarity as a surrogate for LLNC-BLO and propose an algorithm with a finite-time convergence rate of $\mathcal{O}\big( T^{-\frac{2}{5}} \big)$ with lower-level SOSP guarantee.
While existing LLNC-BLO methods \citep{shen2023penalty, xiao2023generalized, liu2024moreau, ma2026sun} also provided finite-time convergence analysis, a direct comparison with them is difficult due to the fact that most of these works assume the PL condition, which is not needed in this work.
{\bf Second}, there is a trade-off in choosing $\sigma_t = \sigma_0 (t+1)^{-p}$:
a larger $p$-value yields faster convergence rate $\mathcal{O}(T^{-2p})$ in \Cref{thm:main}.
However, the condition $q\ge3p$ implies a small learning rate $\alpha_t = \alpha_0 (t+1)^{-q}$.
Also, the smoothness coefficient of $\Phi_{\mu_t}$ scales as $L_t = \Theta(\sigma_t^{-3})$, implying that $\alpha_t = \Theta(L_t^{-1}) = \Theta((t+1)^{-3p})$ when $q=3p$ is required for a fast convergence, which in turn necessitates a relatively small $p$.
To balance these two directions, we choose $(p,q)=(\frac{1}{5}, \frac{3}{5})$ to achieve the most efficient convergence rate $\mathcal{O}\big( T^{-\frac{2}{5}} \big)$.
{\bf Third}, instead of using a fixed number of inner-loop iterations $K$, \alg gradually increases $K_t$ over time, leading to a \textit{two-timescale} algorithmic design.
Specifically,
we introduce a $\mu$-augmentation to allow a well-defined hypergradient in LLNC-BLO.
This term introduces an error that can be controlled by gradually letting $\{\mu_t\}\to0$.
Hence, $K_t$ needs to be increasing rather than fixed to decrease the inner-loop error $\epsilon_t$ and thus $\mu_t$, leading to this two-timescale design.

\section{Experimental Results}

\begin{wrapfigure}{r}{0.6\textwidth}
    \vspace{-0.1in}
    \centering
    \begin{subfigure}[t]{0.485\linewidth}
        \centering
        \includegraphics[width=\textwidth]{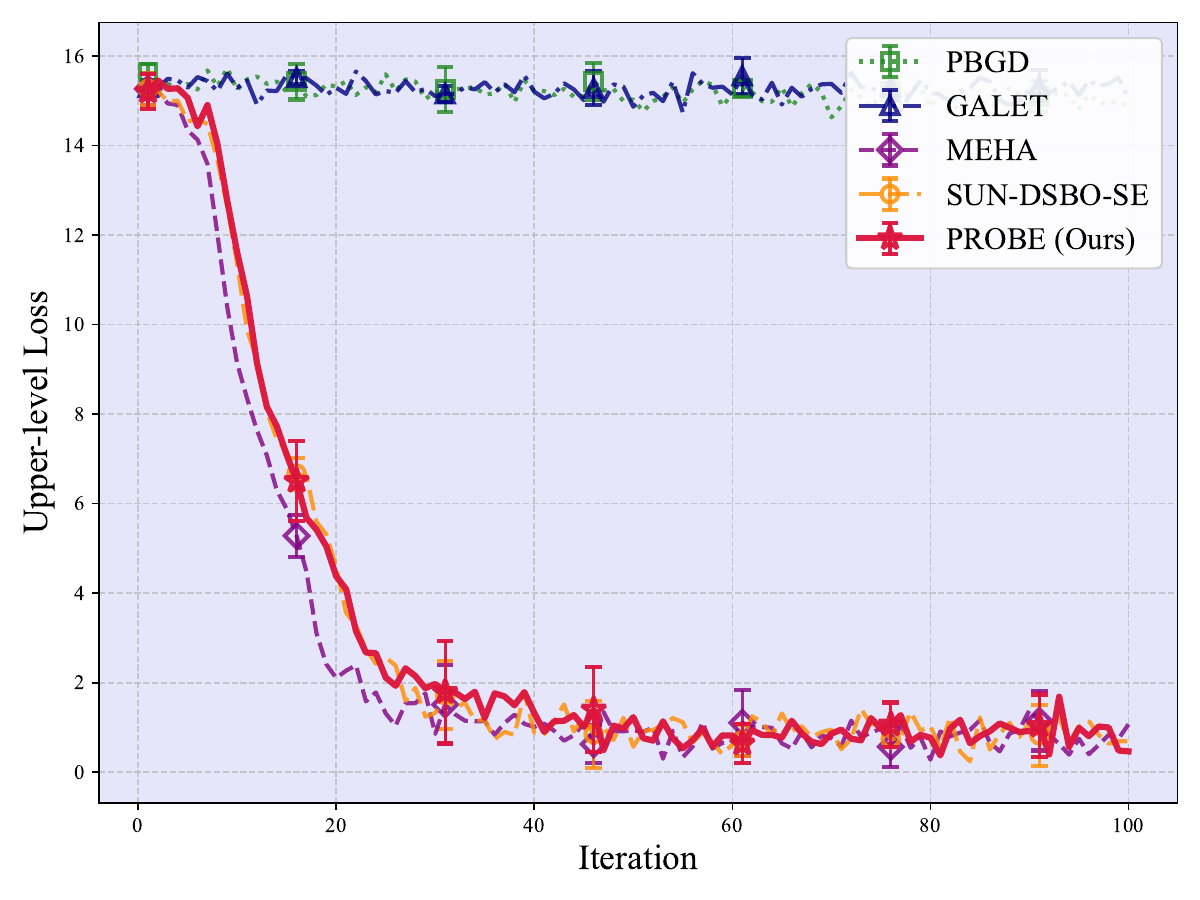}
        \vspace{-1.5em}
        \caption{Loss vs Iteration.}
        \label{fig:baseline-ul-iteration}
    \end{subfigure}
    \hfill
    \begin{subfigure}[t]{0.485\linewidth}
        \centering
        \includegraphics[width=\textwidth]{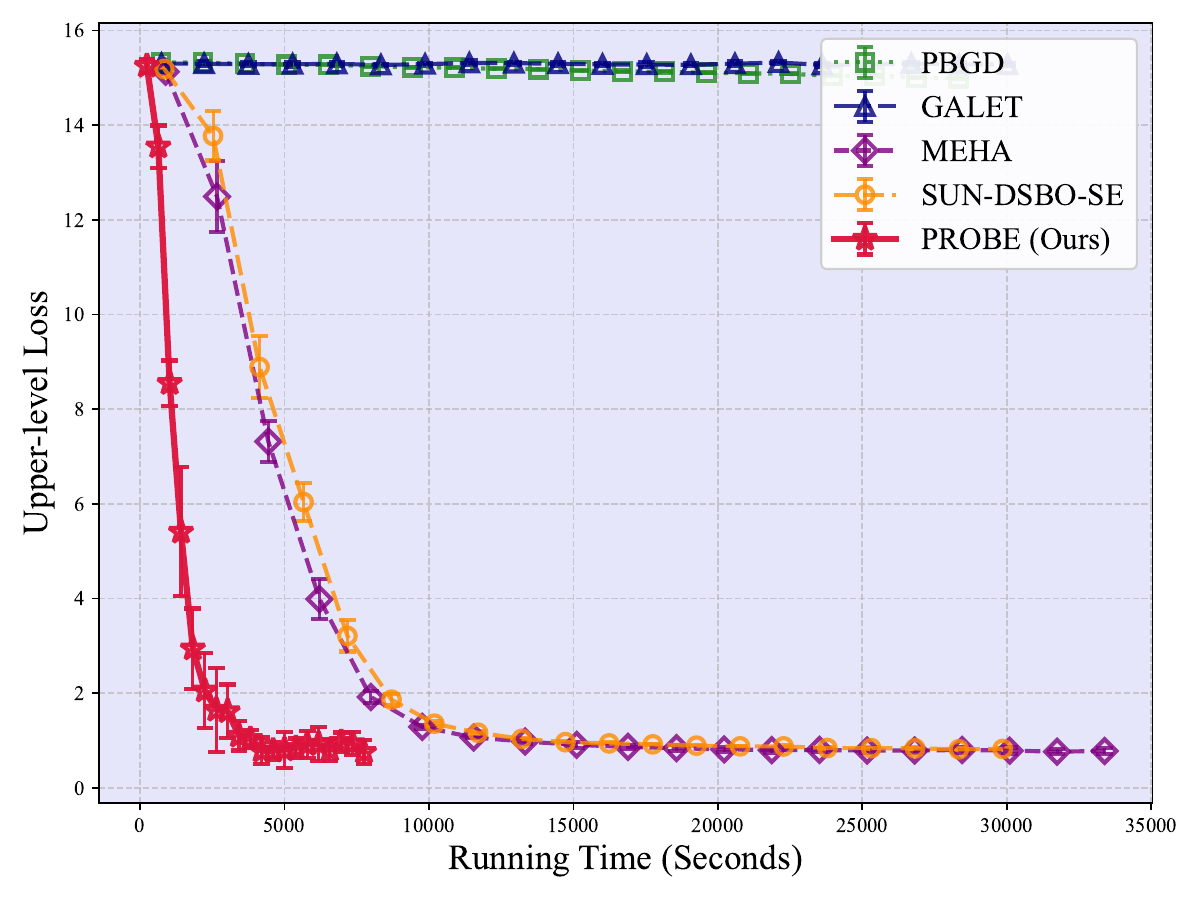}
        \vspace{-1.5em}
        \caption{Loss vs Running time.}
        \label{fig:baseline-ul-time}
    \end{subfigure}
    \vspace{-0.5em}
    \caption{Baseline comparison.}\label{fig:baseline}\vspace{-.15in}
\end{wrapfigure}

In this section, we will validate the efficacy of our \alg method by using a data curation task for LLM finetuning.
Specifically, we first curate a high quality dataset from multiple datasets of unknown qualities,
which is then used to fine-tune an LLM to mitigate toxicity, reduce verbosity, and improve coherence.
This data curation task can be formulated as an LLNC-BLO problem \citep{shen2024seal,pan2025scalebio}, where the lower level aims to determine the mixing proportion of each dataset, and the upper level further fine-tunes the model as a validation process:
\begin{align*}
    \min_{x,y} \sum\nolimits_{i=1}^{n_0} \mathcal{L}\big(r_{y(x)}(p_i), r_i\big), \,\,\,\,\,\, \text{s.t. } y(x)\in \argmin_y \sum\nolimits_{j=1}^\mathcal{D} \text{SoftMax}_x(j)\sum\nolimits_{i=1}^{n_j} \mathcal{L}\big(r_{y}(p_i), r_i\big),
\end{align*}
where $x$ is the weight vector for training datasets, $y$ represents the LLM parameters, $(p,r)$ denotes a prompt-response pair, $\mathcal{L}$ is the loss, $\text{SoftMax}_x(j)=\exp(x_j)/\sum_{j'}\exp(x_{j'})$, index $0$ stands for validation set, and $1, \dotsc, \mathcal{D}$ are training indices.
Clearly, the lower-level problem is nonconvex.
We conduct experiments on the HelpSteer \citep{wang2024helpsteer} dataset, which incorporates data of varying quality.
We employ Llama-3.2-3B-Instruct \citep{meta2024} as the base model.
More implementation details are relegated to Appendix~\ref{sec:app-exp} due to space limitation.

\textbf{1) Baseline Comparison:}
We compare our \alg method against state-of-the-art LLNC-BLO baselines: PBGD \citep{shen2023penalty}, GALET \citep{xiao2023generalized}, MEHA \citep{liu2024moreau}, and SUN-DSBO-SE \citep{ma2026sun}.
\Cref{fig:baseline} illustrates the upper-level loss along with the standard error bars over $5$ trials, which shows that PBGD and GALET fail to converge on this data curation task.
As detailed in Appendix~\ref{sec:app-exp}, PBGD performs well at the lower level.
However, its oversimplified FOSP-based surrogate could not avoid being trapped at undesirable saddle points at the lower level, leading to weak performance at the upper level.
In contrast, MEHA, SUN-DSBO-SE, and \alg converge to much better solutions at upper level.
Although the theoretical convergence rates of these methods are not directly comparable due to different metrics, \alg outperforms all baselines in terms of wall-clock running time, demonstrating its efficiency.

\begin{wrapfigure}{r}{0.6\textwidth}
    \vspace{-0.125in}
    \centering
    \begin{subfigure}[t]{0.485\linewidth}
        \centering
        \includegraphics[width=\textwidth]{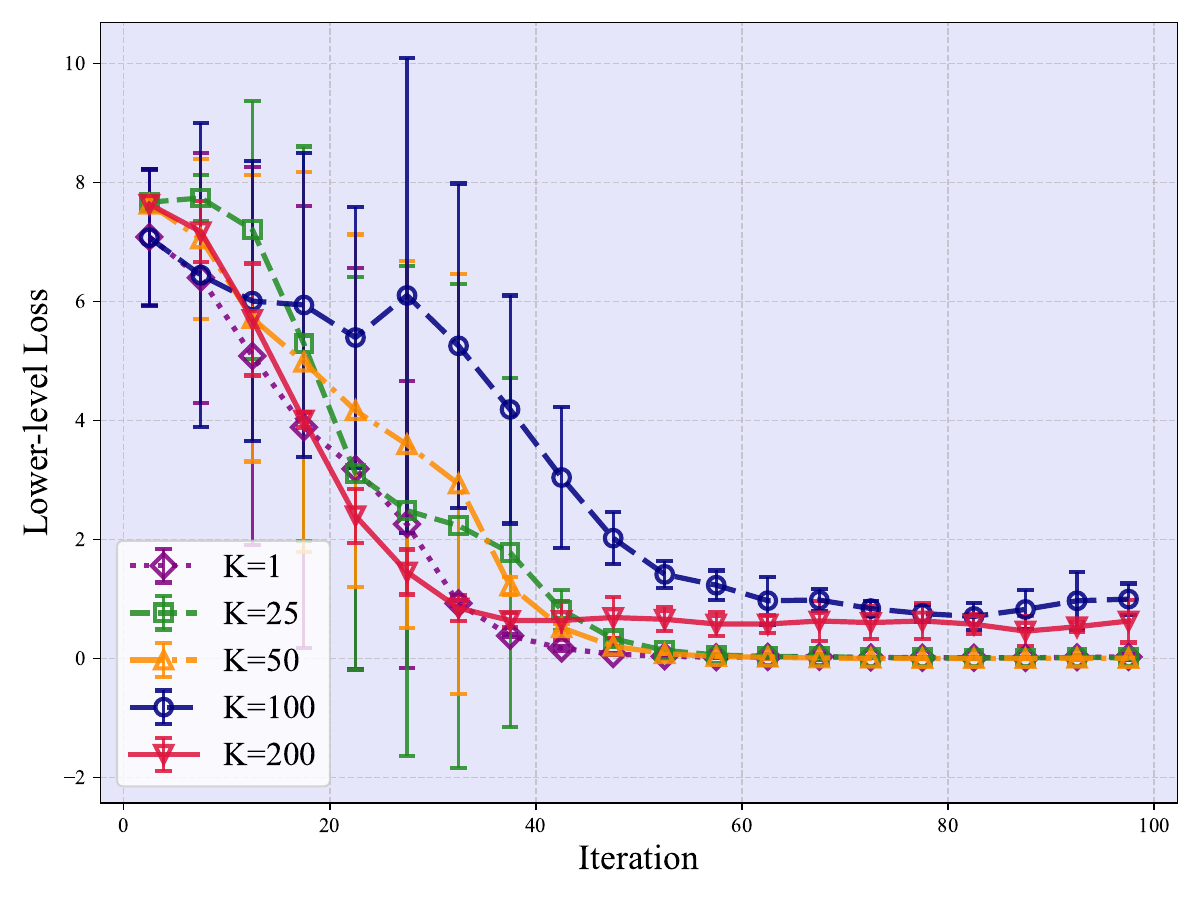}
        \vspace{-1.5em}
        \caption{Lower-level loss with $K$.}
        \label{fig:hyper-1}
    \end{subfigure}
    \hfill
    \begin{subfigure}[t]{0.485\linewidth}
        \centering
        \includegraphics[width=\textwidth]{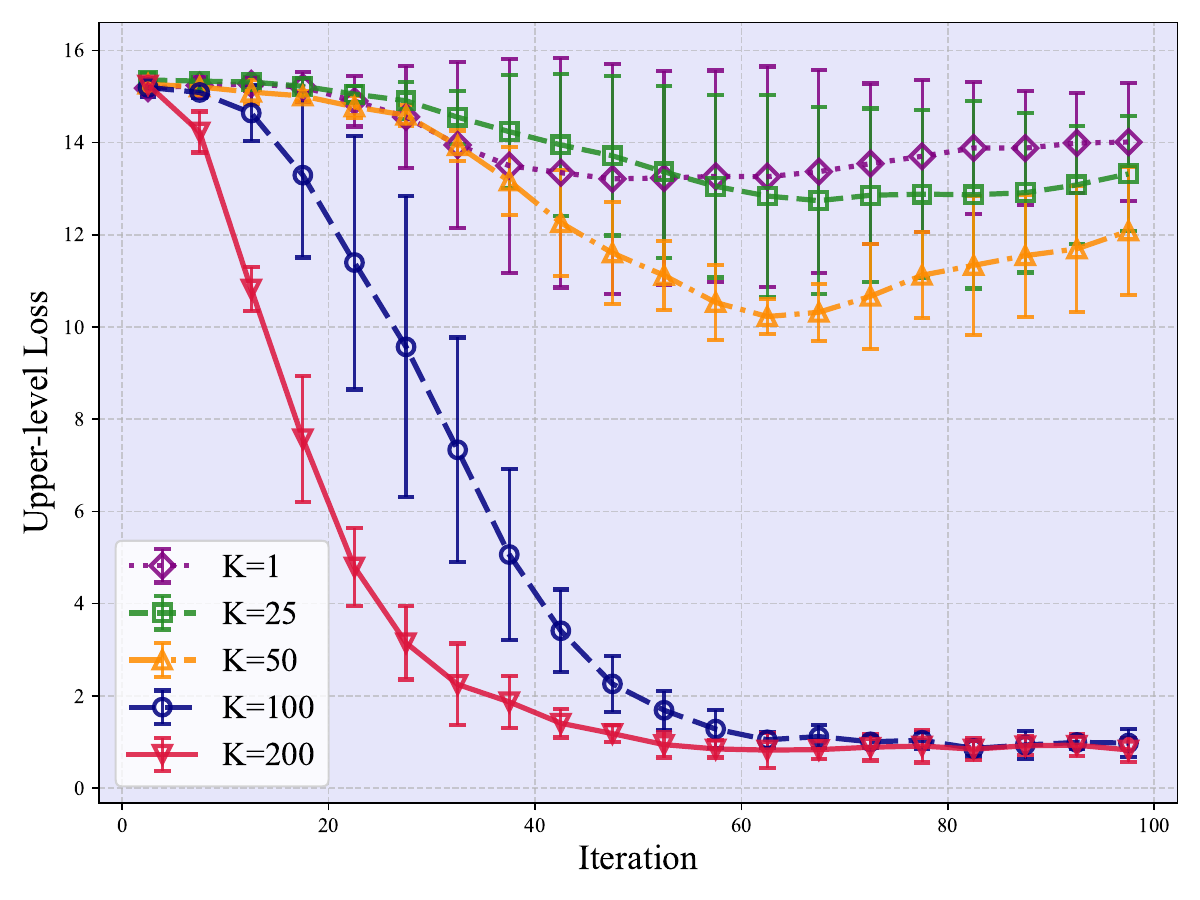}
        \vspace{-1.5em}
        \caption{Upper-level loss with $K$.}
        \label{fig:hyper-2}
    \end{subfigure}
    \begin{subfigure}[t]{0.485\linewidth}
        \centering
        \includegraphics[width=\textwidth]{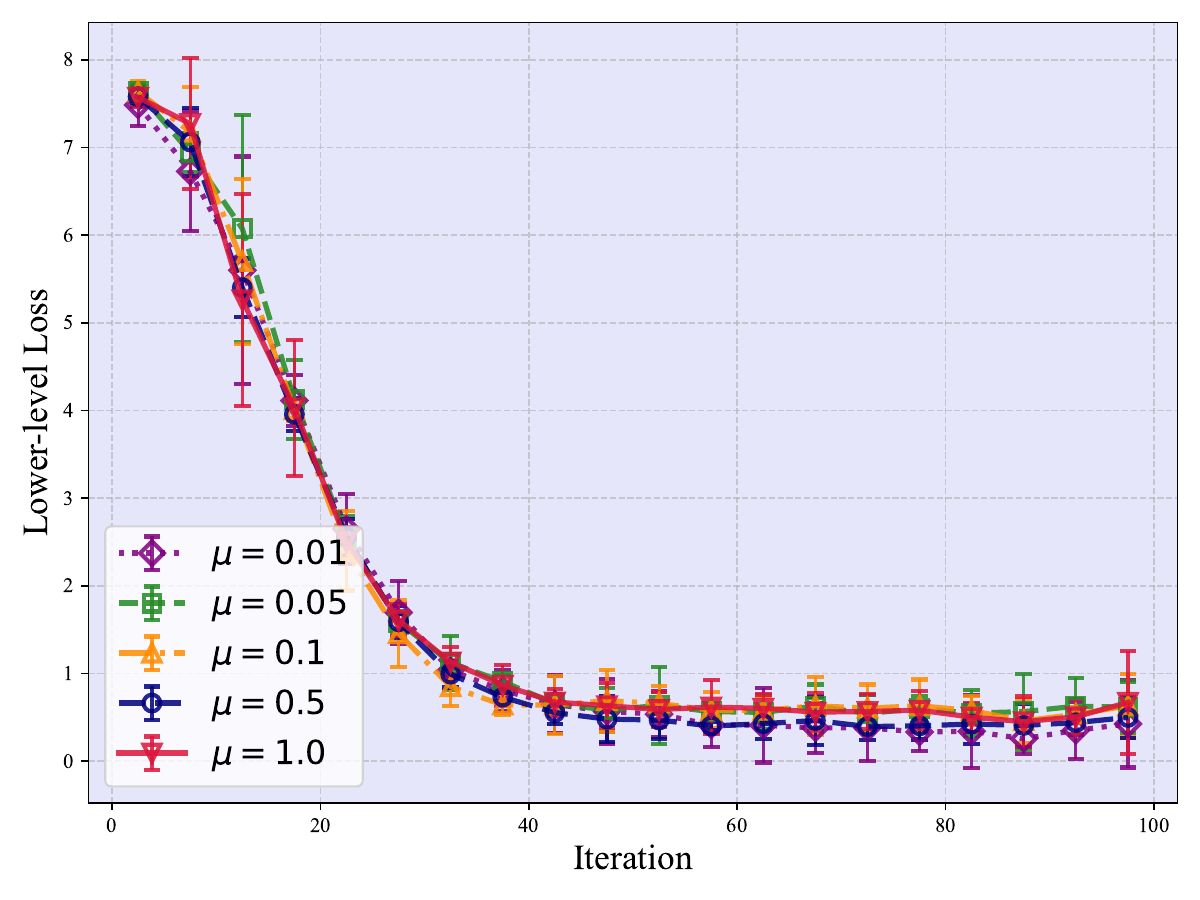}
        \vspace{-1.5em}
        \caption{Lower-level loss with $\mu$.}
        \label{fig:hyper-3}
    \end{subfigure}
    \hfill
    \begin{subfigure}[t]{0.485\linewidth}
        \centering
        \includegraphics[width=\textwidth]{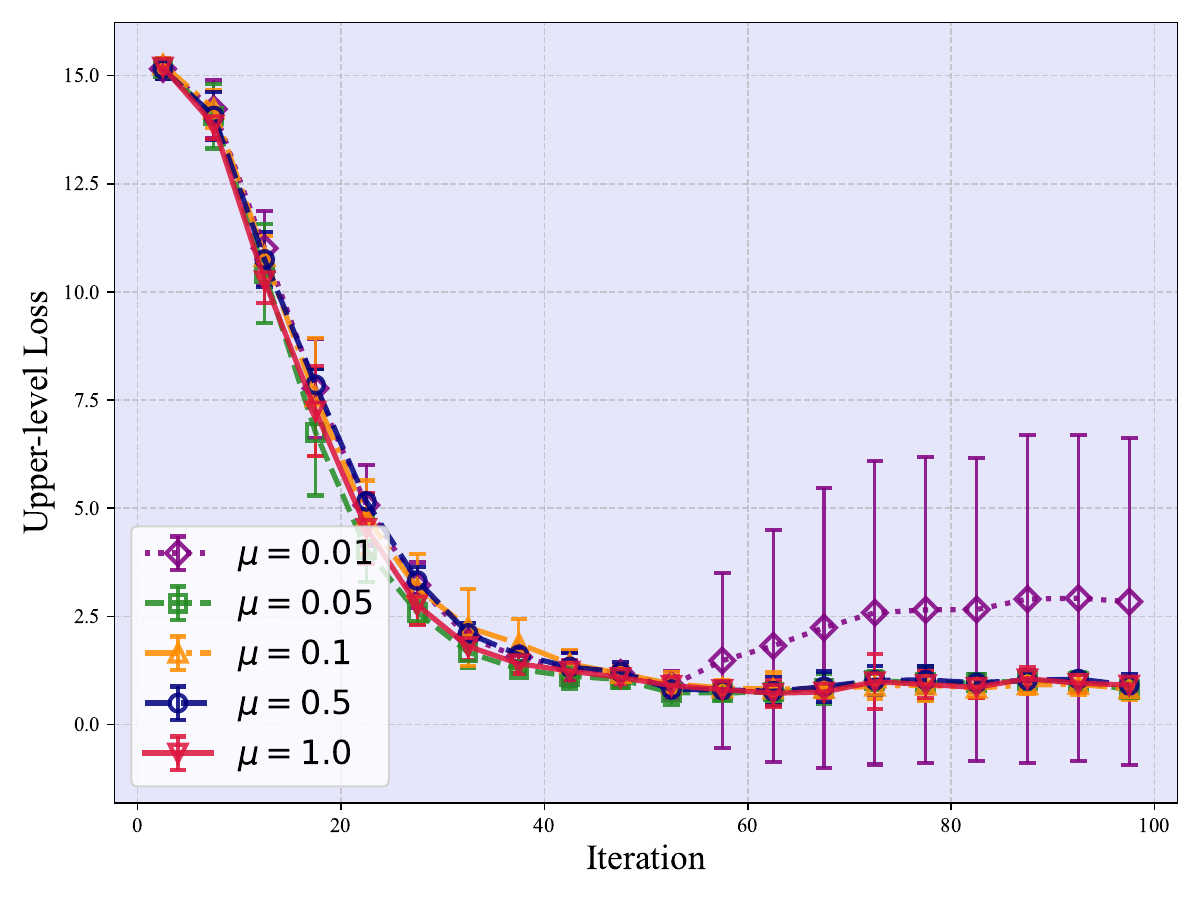}
        \vspace{-1.5em}
        \caption{Upper-level loss with $\mu$.}
        \label{fig:hyper-4}
    \end{subfigure}
    \vspace{-0.5em}
    \caption{Hyperparameter investigation.}
    \label{fig:HYPER}
    \vspace{-.125in}
\end{wrapfigure}

\textbf{2) Hyper-parameter Experimentation and Ablation Study:}
In this experiment, we study how the selections of hyperparameters $K$ and $\mu$ impact the performance of \algn, as shown in Fig.~\ref{fig:HYPER}.
For simplicity, we fix $K$ and $\mu$ for each experiment.
As shown in \Cref{fig:hyper-1,fig:hyper-2}, increasing the inner-loop PGD iterations, $K$, yields a significant improvement in the upper-level loss.
This aligns with our theoretical insights: a small $K$ restricts PGD's ability to probe and escape saddle points.
Consequently, the lower-level problem stagnates at a suboptimal position, which in turn leads to unsatisfactory upper-level performance.
In contrast, when $K$ is sufficiently large, \alg reliably finds a lower-level SOSP, yielding substantial overall performance improvements.
\Cref{fig:hyper-3,fig:hyper-4} show the influence of the augmentation parameter $\mu$, both of which show that the larger $\mu$ facilitates a slightly lower loss value. 
Also, \Cref{fig:hyper-4} shows that the trajectory fluctuates dramatically when $\mu$ is insufficient to restore the Hessian positive definiteness.
This highlights the important tradeoff and provides a guidance in selecting $\mu$.

\begin{wrapfigure}{r}{0.6\textwidth}
    \vspace{-0.15in}
    \centering
    \begin{subfigure}[t]{0.485\linewidth}
        \centering
        \includegraphics[width=\textwidth]{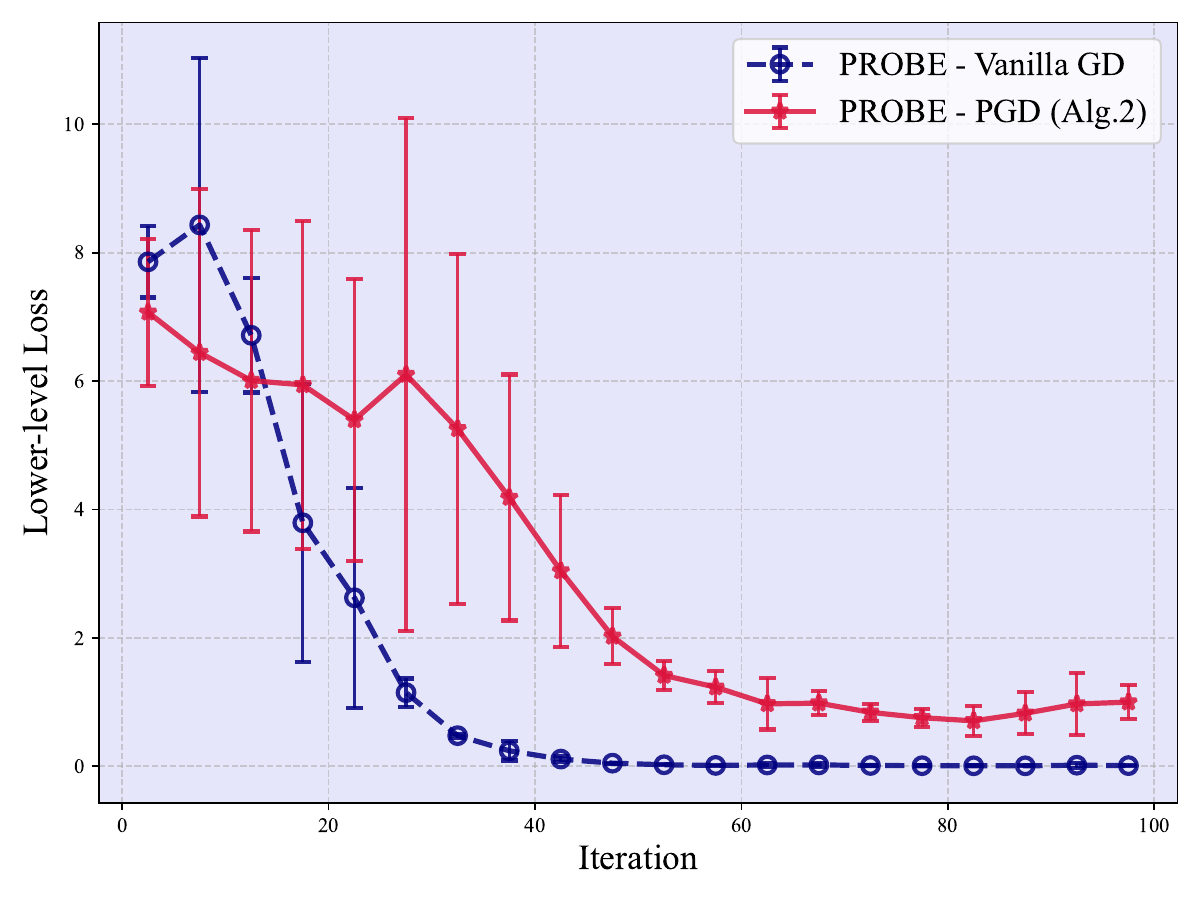}
        \vspace{-1.5em}
        \caption{Lower-level loss.}
        \label{fig:ablation-ll}
    \end{subfigure}
    \hfill
    \begin{subfigure}[t]{0.485\linewidth}
        \centering
        \includegraphics[width=\textwidth]{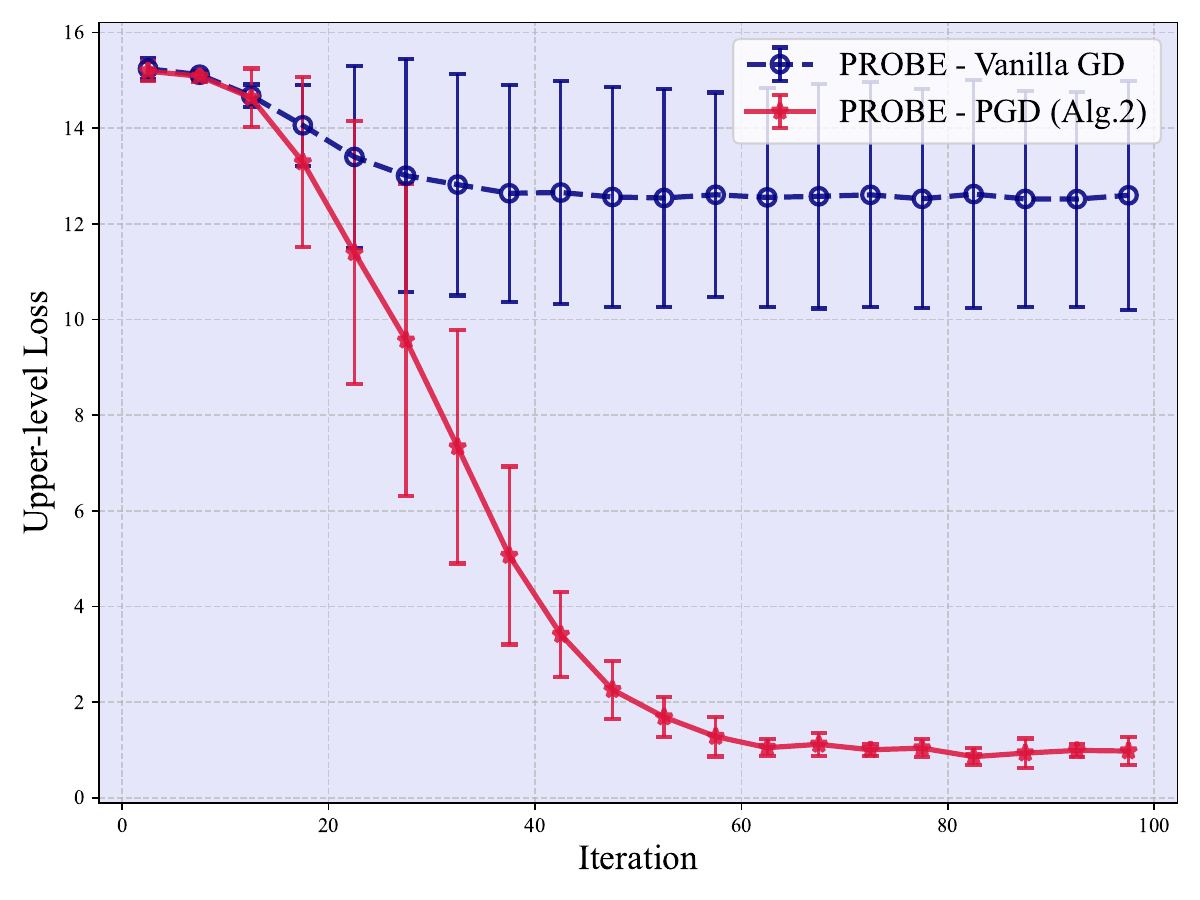}
        \vspace{-1.5em}
        \caption{Upper-level loss.}
        \label{fig:ablation-ul}
    \end{subfigure}
    \vspace{-0.5em}
    \caption{Ablation Study.}\label{fig:ablation}\vspace{-.125in}
\end{wrapfigure}
We also conduct the following ablation study by replacing the PGD method in Algorithm~\ref{alg:PGD} with the vanilla gradient descent method, keeping all other components of \alg unchanged.
As shown in \Cref{fig:ablation}, without saddle points escaping, \alg is easily trapped in solutions seemingly better at the lower level but are suboptimal at the upper level, highlighting the importance of our SOSP-based lower-level surrogate and the efficacy of our proposed \alg.
Due to space limitation, we provide further numerical details and a meta-learning experiment in Appendix~\ref{sec:app-exp}.

\section{Conclusion}

In this paper, we investigated bilevel optimization with non-convex lower levels.
We showed that existing first-order stationarity-based surrogates often fail to escape lower-level saddle points.
To address this challenge, we proposed a second-order stationarity-based surrogate, based on which we developed a new algorithm called \alg and established its finite-time convergence rate.
We conducted experiments on dataset curation tasks for LLM finetuning to validate our \alg algorithm.
We note that the focus of this work is on deterministic LLNC-BLO problems.
Future directions include extending our framework into stochastic LLNC-BLO problems.

\bibliography{reference}

@inproceedings{ge2015escaping,
  title={Escaping from saddle points—online stochastic gradient for tensor decomposition},
  author={Ge, Rong and Huang, Furong and Jin, Chi and Yuan, Yang},
  booktitle={Conference on learning theory},
  pages={797--842},
  year={2015},
  organization={PMLR}
}

@inproceedings{jin2017escape,
  title={How to escape saddle points efficiently},
  author={Jin, Chi and Ge, Rong and Netrapalli, Praneeth and Kakade, Sham M and Jordan, Michael I},
  booktitle={International conference on machine learning},
  pages={1724--1732},
  year={2017},
  organization={PMLR}
}

@inproceedings{jin2018accelerated,
  title={Accelerated gradient descent escapes saddle points faster than gradient descent},
  author={Jin, Chi and Netrapalli, Praneeth and Jordan, Michael I},
  booktitle={Conference on learning theory},
  pages={1042--1085},
  year={2018},
  organization={PMLR}
}

@article{zhang2021escape,
  title={Escape saddle points by a simple gradient-descent based algorithm},
  author={Zhang, Chenyi and Li, Tongyang},
  journal={Advances in Neural Information Processing Systems},
  volume={34},
  pages={8545--8556},
  year={2021}
}

@article{xiao2023generalized,
  title={A generalized alternating method for bilevel learning under the polyak-$\{$$\backslash$L$\}$ ojasiewicz condition},
  author={Xiao, Quan and Lu, Songtao and Chen, Tianyi},
  journal={arXiv preprint arXiv:2306.02422},
  year={2023}
}

@inproceedings{shen2023penalty,
  title={On penalty-based bilevel gradient descent method},
  author={Shen, Han and Chen, Tianyi},
  booktitle={International conference on machine learning},
  pages={30992--31015},
  year={2023},
  organization={PMLR}
}

@article{ma2026sun,
  title={SUN-DSBO: A Structured Unified Framework for Nonconvex Decentralized Stochastic Bilevel Optimization},
  author={Ma, Yaoshuai and Wang, Xiao and Yao, Wei and Zhang, Jin},
  journal={arXiv preprint arXiv:2601.22682},
  year={2026}
}

@article{liu2024moreau,
  title={Moreau envelope for nonconvex bi-level optimization: A single-loop and hessian-free solution strategy},
  author={Liu, Risheng and Liu, Zhu and Yao, Wei and Zeng, Shangzhi and Zhang, Jin},
  journal={arXiv preprint arXiv:2405.09927},
  year={2024}
}

@inproceedings{ji2021bilevel,
  title={Bilevel optimization: Convergence analysis and enhanced design},
  author={Ji, Kaiyi and Yang, Junjie and Liang, Yingbin},
  booktitle={International conference on machine learning},
  pages={4882--4892},
  year={2021},
  organization={PMLR}
}

@article{bracken1973mathematical,
  title={Mathematical programs with optimization problems in the constraints},
  author={Bracken, Jerome and McGill, James T},
  journal={Operations research},
  volume={21},
  number={1},
  pages={37--44},
  year={1973},
  publisher={INFORMS}
}

@article{zhang2024introduction,
  title={An introduction to bilevel optimization: Foundations and applications in signal processing and machine learning},
  author={Zhang, Yihua and Khanduri, Prashant and Tsaknakis, Ioannis and Yao, Yuguang and Hong, Mingyi and Liu, Sijia},
  journal={IEEE Signal Processing Magazine},
  volume={41},
  number={1},
  pages={38--59},
  year={2024},
  publisher={IEEE}
}

@article{liu2021investigating,
  title={Investigating bi-level optimization for learning and vision from a unified perspective: A survey and beyond},
  author={Liu, Risheng and Gao, Jiaxin and Zhang, Jin and Meng, Deyu and Lin, Zhouchen},
  journal={IEEE Transactions on Pattern Analysis and Machine Intelligence},
  volume={44},
  number={12},
  pages={10045--10067},
  year={2021},
  publisher={IEEE}
}

@article{hong2023two,
  title={A two-timescale stochastic algorithm framework for bilevel optimization: Complexity analysis and application to actor-critic},
  author={Hong, Mingyi and Wai, Hoi-To and Wang, Zhaoran and Yang, Zhuoran},
  journal={SIAM Journal on Optimization},
  volume={33},
  number={1},
  pages={147--180},
  year={2023},
  publisher={SIAM}
}

@article{kudo2026sample,
  title={Sample-Efficient Hypergradient Estimation for Decentralized Bi-Level Reinforcement Learning},
  author={Kudo, Mikoto and Tanabe, Takumi and Wachi, Akifumi and Akimoto, Youhei},
  journal={arXiv preprint arXiv:2603.14867},
  year={2026}
}

@inproceedings{franceschi2018bilevel,
  title={Bilevel programming for hyperparameter optimization and meta-learning},
  author={Franceschi, Luca and Frasconi, Paolo and Salzo, Saverio and Grazzi, Riccardo and Pontil, Massimiliano},
  booktitle={International conference on machine learning},
  pages={1568--1577},
  year={2018},
  organization={PMLR}
}

@inproceedings{zhang2022revisiting,
  title={Revisiting and advancing fast adversarial training through the lens of bi-level optimization},
  author={Zhang, Yihua and Zhang, Guanhua and Khanduri, Prashant and Hong, Mingyi and Chang, Shiyu and Liu, Sijia},
  booktitle={International Conference on Machine Learning},
  pages={26693--26712},
  year={2022},
  organization={PMLR}
}

@article{shen2024seal,
  title={Seal: Safety-enhanced aligned llm fine-tuning via bilevel data selection},
  author={Shen, Han and Chen, Pin-Yu and Das, Payel and Chen, Tianyi},
  journal={arXiv preprint arXiv:2410.07471},
  year={2024}
}

@article{ghadimi2018approximation,
  title={Approximation methods for bilevel programming},
  author={Ghadimi, Saeed and Wang, Mengdi},
  journal={arXiv preprint arXiv:1802.02246},
  year={2018}
}

@article{yang2021provably,
  title={Provably faster algorithms for bilevel optimization},
  author={Yang, Junjie and Ji, Kaiyi and Liang, Yingbin},
  journal={Advances in Neural Information Processing Systems},
  volume={34},
  pages={13670--13682},
  year={2021}
}

@article{dagreou2022framework,
  title={A framework for bilevel optimization that enables stochastic and global variance reduction algorithms},
  author={Dagr{\'e}ou, Mathieu and Ablin, Pierre and Vaiter, Samuel and Moreau, Thomas},
  journal={Advances in Neural Information Processing Systems},
  volume={35},
  pages={26698--26710},
  year={2022}
}

@inproceedings{liu2023averaged,
  title={Averaged method of multipliers for bi-level optimization without lower-level strong convexity},
  author={Liu, Risheng and Liu, Yaohua and Yao, Wei and Zeng, Shangzhi and Zhang, Jin},
  booktitle={International Conference on Machine Learning},
  pages={21839--21866},
  year={2023},
  organization={PMLR}
}

@article{cao2023projection,
  title={Projection-free methods for stochastic simple bilevel optimization with convex lower-level problem},
  author={Cao, Jincheng and Jiang, Ruichen and Abolfazli, Nazanin and Yazdandoost Hamedani, Erfan and Mokhtari, Aryan},
  journal={Advances in Neural Information Processing Systems},
  volume={36},
  pages={6105--6131},
  year={2023}
}

@inproceedings{jiang2023conditional,
  title={A conditional gradient-based method for simple bilevel optimization with convex lower-level problem},
  author={Jiang, Ruichen and Abolfazli, Nazanin and Mokhtari, Aryan and Hamedani, Erfan Yazdandoost},
  booktitle={International Conference on Artificial Intelligence and Statistics},
  pages={10305--10323},
  year={2023},
  organization={PMLR}
}

@article{kwon2023penalty,
  title={On penalty methods for nonconvex bilevel optimization and first-order stochastic approximation},
  author={Kwon, Jeongyeol and Kwon, Dohyun and Wright, Stephen and Nowak, Robert},
  journal={arXiv preprint arXiv:2309.01753},
  year={2023}
}

@article{jiang2025discretization,
  title={A Discretization Approach for Bilevel Optimization with Low-Dimensional and Non-Convex Lower-Level},
  author={Jiang, Xiaotian and Tsaknakis, Ioannis and Khanduri, Prashant and Hong, Mingyi},
  journal={arXiv preprint arXiv:2505.10830},
  year={2025}
}

@article{jiang2025correspondence,
  title={A correspondence-driven approach for bilevel decision-making with nonconvex lower-level problems},
  author={Jiang, Xiaotian and Li, Jiaxiang and Bi, Jiawen and Hong, Mingyi and Zhang, Shuzhong},
  journal={arXiv preprint arXiv:2509.01148},
  year={2025}
}

@inproceedings{chen2024finding,
  title={On finding small hyper-gradients in bilevel optimization: Hardness results and improved analysis},
  author={Chen, Lesi and Xu, Jing and Zhang, Jingzhao},
  booktitle={The Thirty Seventh Annual Conference on Learning Theory},
  pages={947--980},
  year={2024},
  organization={PMLR}
}

@article{dauphin2014identifying,
  title={Identifying and attacking the saddle point problem in high-dimensional non-convex optimization},
  author={Dauphin, Yann N and Pascanu, Razvan and Gulcehre, Caglar and Cho, Kyunghyun and Ganguli, Surya and Bengio, Yoshua},
  journal={Advances in neural information processing systems},
  volume={27},
  year={2014}
}

@article{sun2018geometric,
  title={A geometric analysis of phase retrieval},
  author={Sun, Ju and Qu, Qing and Wright, John},
  journal={Foundations of Computational Mathematics},
  volume={18},
  number={5},
  pages={1131--1198},
  year={2018},
  publisher={Springer}
}

@article{nesterov2006cubic,
  title={Cubic regularization of Newton method and its global performance},
  author={Nesterov, Yurii and Polyak, Boris T},
  journal={Mathematical programming},
  volume={108},
  number={1},
  pages={177--205},
  year={2006},
  publisher={Springer}
}

@article{bhojanapalli2016global,
  title={Global optimality of local search for low rank matrix recovery},
  author={Bhojanapalli, Srinadh and Neyshabur, Behnam and Srebro, Nati},
  journal={Advances in Neural Information Processing Systems},
  volume={29},
  year={2016}
}

@article{ge2016matrix,
  title={Matrix completion has no spurious local minimum},
  author={Ge, Rong and Lee, Jason D and Ma, Tengyu},
  journal={Advances in neural information processing systems},
  volume={29},
  year={2016}
}

@inproceedings{grazzi2020iteration,
  title={On the iteration complexity of hypergradient computation},
  author={Grazzi, Riccardo and Franceschi, Luca and Pontil, Massimiliano and Salzo, Saverio},
  booktitle={International Conference on Machine Learning},
  pages={3748--3758},
  year={2020},
  organization={PMLR}
}

@article{huang2025efficiently,
  title={Efficiently escaping saddle points in bilevel optimization},
  author={Huang, Minhui and Chen, Xuxing and Ji, Kaiyi and Ma, Shiqian and Lai, Lifeng},
  journal={Journal of machine learning research},
  volume={26},
  number={1},
  pages={1--61},
  year={2025}
}

@article{ye1995optimality,
  title={Optimality conditions for bilevel programming problems},
  author={Ye, Jane J and Zhu, DL},
  journal={Optimization},
  volume={33},
  number={1},
  pages={9--27},
  year={1995},
  publisher={Taylor \& Francis}
}

@article{arbel2021amortized,
  title={Amortized implicit differentiation for stochastic bilevel optimization},
  author={Arbel, Michael and Mairal, Julien},
  journal={arXiv preprint arXiv:2111.14580},
  year={2021}
}

@inproceedings{shaban2019truncated,
  title={Truncated back-propagation for bilevel optimization},
  author={Shaban, Amirreza and Cheng, Ching-An and Hatch, Nathan and Boots, Byron},
  booktitle={The 22nd international conference on artificial intelligence and statistics},
  pages={1723--1732},
  year={2019},
  organization={PMLR}
}

@inproceedings{shen2024method,
  title={A method for bilevel optimization with convex lower-level problem},
  author={Shen, Han and Paternain, Santiago and Liu, Gaowen and Kompella, Ramana and Chen, Tianyi},
  booktitle={ICASSP 2024-2024 IEEE International Conference on Acoustics, Speech and Signal Processing (ICASSP)},
  pages={9426--9430},
  year={2024},
  organization={IEEE}
}

@article{chen2023bilevel,
  title={Bilevel optimization without lower-level strong convexity from the hyper-objective perspective},
  author={Chen, Lesi and Xu, Jing and Zhang, Jingzhao},
  year={2023}
}

@article{murty1987some,
  title={Some NP-complete problems in quadratic and nonlinear programming},
  author={Murty, Katta G and Kabadi, Santosh N and others},
  journal={Mathematical programming},
  volume={39},
  number={2},
  pages={117--129},
  year={1987}
}

@inproceedings{agarwal2017finding,
  title={Finding approximate local minima faster than gradient descent},
  author={Agarwal, Naman and Allen-Zhu, Zeyuan and Bullins, Brian and Hazan, Elad and Ma, Tengyu},
  booktitle={Proceedings of the 49th annual ACM SIGACT symposium on theory of computing},
  pages={1195--1199},
  year={2017}
}

@article{allen2018neon2,
  title={Neon2: Finding local minima via first-order oracles},
  author={Allen-Zhu, Zeyuan and Li, Yuanzhi},
  journal={Advances in Neural Information Processing Systems},
  volume={31},
  year={2018}
}

@article{jin2021nonconvex,
  title={On nonconvex optimization for machine learning: Gradients, stochasticity, and saddle points},
  author={Jin, Chi and Netrapalli, Praneeth and Ge, Rong and Kakade, Sham M and Jordan, Michael I},
  journal={Journal of the ACM (JACM)},
  volume={68},
  number={2},
  pages={1--29},
  year={2021},
  publisher={ACM New York, NY, USA}
}

@article{liu2018adaptive,
  title={Adaptive negative curvature descent with applications in non-convex optimization},
  author={Liu, Mingrui and Li, Zhe and Wang, Xiaoyu and Yi, Jinfeng and Yang, Tianbao},
  journal={Advances in Neural Information Processing Systems},
  volume={31},
  year={2018}
}

@article{fang2018spider,
  title={Spider: Near-optimal non-convex optimization via stochastic path-integrated differential estimator},
  author={Fang, Cong and Li, Chris Junchi and Lin, Zhouchen and Zhang, Tong},
  journal={Advances in neural information processing systems},
  volume={31},
  year={2018}
}

@article{vlatakis2019efficiently,
  title={Efficiently avoiding saddle points with zero order methods: No gradients required},
  author={Vlatakis-Gkaragkounis, Emmanouil-Vasileios and Flokas, Lampros and Piliouras, Georgios},
  journal={Advances in neural information processing systems},
  volume={32},
  year={2019}
}

@article{zhang2022zeroth,
  title={Zeroth-order negative curvature finding: Escaping saddle points without gradients},
  author={Zhang, Hualin and Xiong, Huan and Gu, Bin},
  journal={Advances in Neural Information Processing Systems},
  volume={35},
  pages={38332--38344},
  year={2022}
}

@inproceedings{ren2023escaping,
  title={Escaping saddle points in zeroth-order optimization: the power of two-point estimators},
  author={Ren, Zhaolin and Tang, Yujie and Li, Na},
  booktitle={International Conference on Machine Learning},
  pages={28914--28975},
  year={2023},
  organization={PMLR}
}

@inproceedings{zhang2022faster,
  title={Faster gradient-free methods for escaping saddle points},
  author={Zhang, Hualin and Gu, Bin},
  booktitle={The Eleventh International Conference on Learning Representations},
  year={2022}
}

@article{chen2025near,
  title={Near-optimal nonconvex-strongly-convex bilevel optimization with fully first-order oracles},
  author={Chen, Lesi and Ma, Yaohua and Zhang, Jingzhao},
  journal={Journal of Machine Learning Research},
  volume={26},
  number={109},
  pages={1--56},
  year={2025}
}

@inproceedings{pan2025scalebio,
  title={Scalebio: Scalable bilevel optimization for llm data reweighting},
  author={Pan, Rui and Zhang, Dylan and Zhang, Hanning and Pan, Xingyuan and Xu, Minrui and Zhang, Jipeng and Pi, Renjie and Wang, Xiaoyu and Zhang, Tong},
  booktitle={Proceedings of the 63rd Annual Meeting of the Association for Computational Linguistics (Volume 1: Long Papers)},
  pages={31959--31982},
  year={2025}
}

@inproceedings{wang2024helpsteer,
  title={Helpsteer: Multi-attribute helpfulness dataset for steerlm},
  author={Wang, Zhilin and Dong, Yi and Zeng, Jiaqi and Adams, Virginia and Sreedhar, Makesh Narsimhan and Egert, Daniel and Delalleau, Olivier and Scowcroft, Jane and Kant, Neel and Swope, Aidan and others},
  booktitle={Proceedings of the 2024 Conference of the North American Chapter of the Association for Computational Linguistics: Human Language Technologies (Volume 1: Long Papers)},
  pages={3371--3384},
  year={2024}
}

@article{meta2024,
  title={meta-llama/llama-3.2-3b-instruct},
  author={Meta},
  year={2024},
  url={https://huggingface.co/meta-llama/Llama-3.2-3B-Instruct}
}

@inproceedings{qin2025duet,
  title={DUET: Decentralized bilevel optimization without lower-level strong convexity},
  author={Qin, Zhen and Liu, Zhuqing and Lu, Songtao and Liang, Yingbin and Liu, Jia},
  booktitle={The Thirteenth International Conference on Learning Representations},
  year={2025}
}

@article{malladi2023fine,
  title={Fine-tuning language models with just forward passes},
  author={Malladi, Sadhika and Gao, Tianyu and Nichani, Eshaan and Damian, Alex and Lee, Jason D and Chen, Danqi and Arora, Sanjeev},
  journal={Advances in Neural Information Processing Systems},
  volume={36},
  pages={53038--53075},
  year={2023}
}

@article{castin2023smooth,
  title={How smooth is attention?},
  author={Castin, Val{\'e}rie and Ablin, Pierre and Peyr{\'e}, Gabriel},
  journal={arXiv preprint arXiv:2312.14820},
  year={2023}
}

@article{li2024getting,
  title={Getting more juice out of the sft data: Reward learning from human demonstration improves sft for llm alignment},
  author={Li, Jiaxiang and Zeng, Siliang and Wai, Hoi-To and Li, Chenliang and Garcia, Alfredo and Hong, Mingyi},
  journal={Advances in Neural Information Processing Systems},
  volume={37},
  pages={124292--124318},
  year={2024}
}

@article{hu2023contextual,
  title={Contextual stochastic bilevel optimization},
  author={Hu, Yifan and Wang, Jie and Xie, Yao and Krause, Andreas and Kuhn, Daniel},
  journal={Advances in Neural Information Processing Systems},
  volume={36},
  pages={78412--78434},
  year={2023}
}
\bibliographystyle{apalike}

\clearpage
\appendix
\section*{Appendix}

\section{Additional Numerical Results}\label{sec:app-exp}

\subsection{Meta-Learning Experiments}

Meta-learning seeks to improve model generalization and is naturally formulated as a BLO problem \citep{franceschi2018bilevel,ji2021bilevel,hu2023contextual,qin2025duet}. Specifically, the lower level implements task-specific adaptation on the training data, while the upper level enhances the representation utility to ensure generalization ability of the model.

We train a multi-layer perceptron (MLP, corresponding to $x$) with depth $2$, width $512$ and ReLU activation, along with an additional $256$-dimensional linear layer (corresponding to $y$) on MNIST dataset. In particular, we construct 5 MNIST heterogeneous training datasets, each containing $80\%$ of a selected pair of digits (e.g., $(0,1)$) and $20\%$ of other digit pairs. The validation set are constructed from the MNIST test split, using the same ratio of five digit-pair tasks. The formal formulation is given as follows:
\begin{align*}
    & \min_{x,y} \sum_{s\in\mathcal{D}_0}\mathcal{L}\big((x\circ y)(s), l(s)\big) \\
    \text{subject to } & y \in \mathcal{S}(x) := \argmin_y \sum_{j=1}^5 \sum_{s\in\mathcal{D}_j}\mathcal{L}\big((x\circ y)(s), l(s)\big),
\end{align*}
where $s$ denotes the digit sample, $l(s)$ denotes its label, $x\circ y(s)$ denotes the model output, $\mathcal{L}$ denotes the cross-entropy loss function, and $\mathcal{D}_j, j\in\{0, \dotsc,5\}$ are the datasets. Since the loss function is convex, and $y$ corresponds to a linear layer, the lower-level problem is convex (but not strongly convex) w.r.t. $y$. We consider this meta-learning task to demonstrate the capability of \alg in the LLGC setting.

As shown in \Cref{fig:app-meta}, all baselines and our \alg converge at both upper and lower levels. All baseline methods succeed in this meta-learning task because the FOSP-based relaxed surrogate aligns exactly with the original BLO problem in this LLGC environment, preventing them from getting trapped at saddle points (In fact, no saddle points exist due to the convexity). \algn, on the other hand, still achieves the fastest convergence, demonstrating both its validity in the LLGC setting and its consistent efficiency.

\begin{figure}[H]
    \centering
    \begin{subfigure}[t]{0.425\linewidth}
        \centering
        \includegraphics[width=\textwidth]{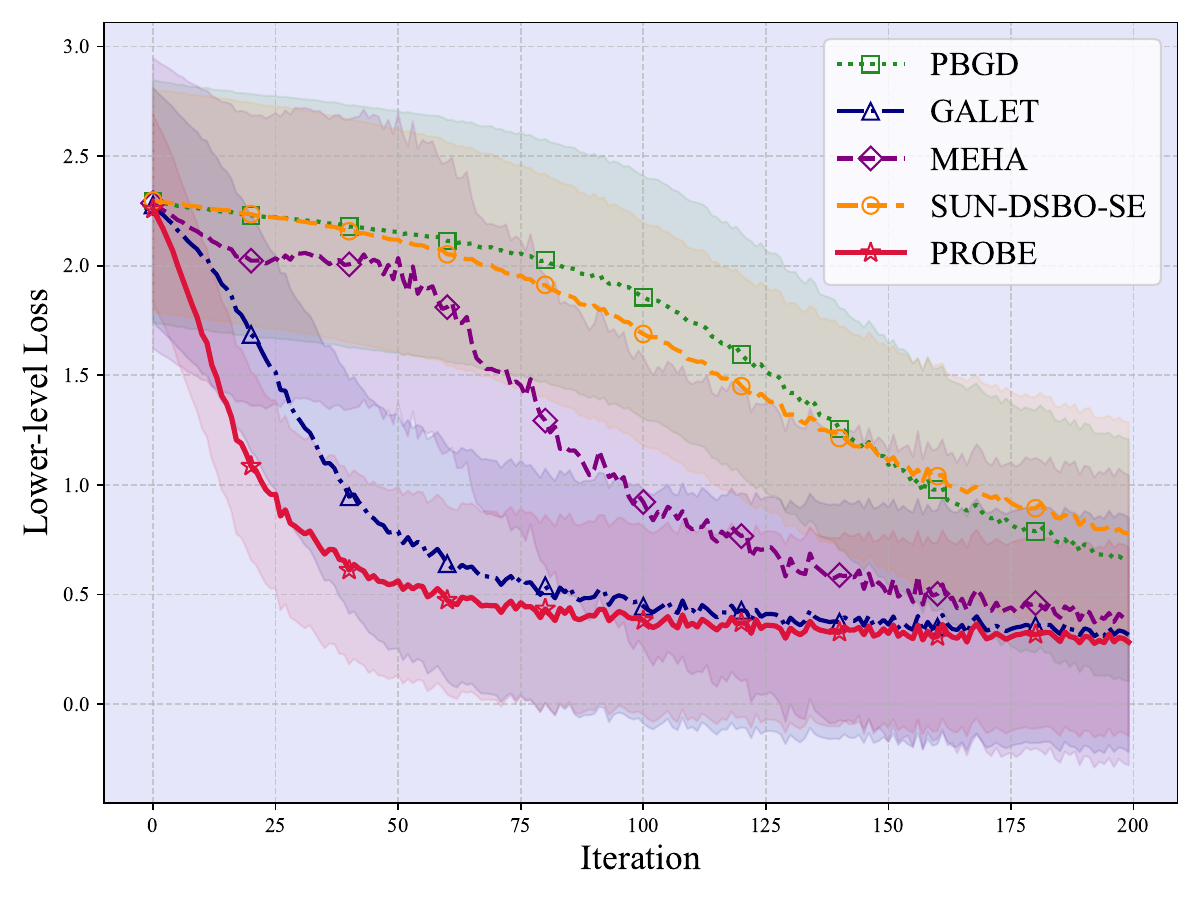}
        \vspace{-1.5em}
        \caption{Lower-level loss.}
        \label{fig:meta-ll-iteration}
    \end{subfigure}
    \hfill
    \begin{subfigure}[t]{0.425\linewidth}
        \centering
        \includegraphics[width=\textwidth]{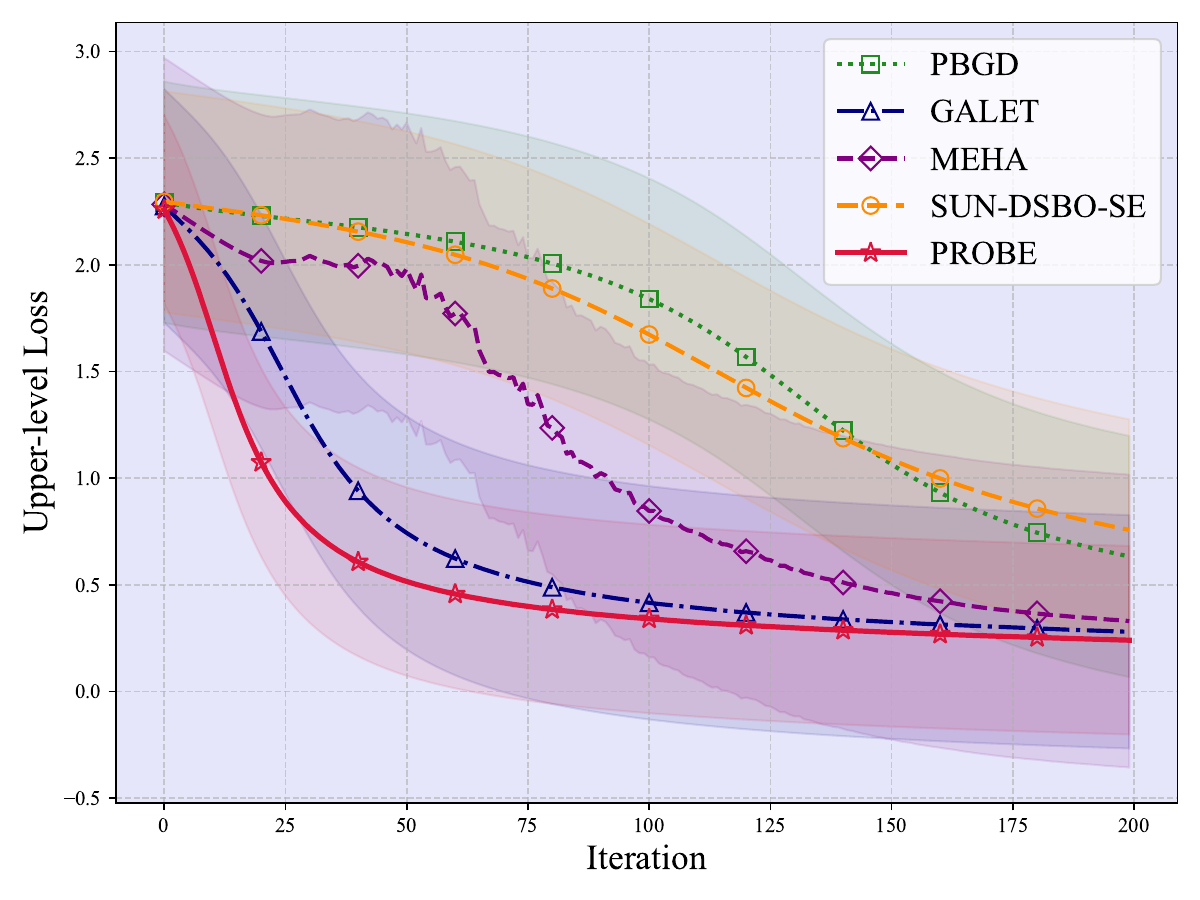}
        \vspace{-1.5em}
        \caption{Upper-level loss.}
        \label{fig:meta-ll-time}
    \end{subfigure}
    \caption{Baseline comparison in meta-learning task.}
    \label{fig:app-meta}
\end{figure}

\subsection{Implementation Details and Additional Results on Data Curation Task}

In the data curation task, we fine-tune an LLM, Llama-3.2-3B-Instruct \citep{meta2024} model, on the HelpSteer \citep{wang2024helpsteer} dataset, which naturally contains prompts with responses of varying quality. During training, corresponding to the lower-level problem in our formulation, the LLM learns to distinguish between two datasets and is expected to assign higher weights to the one with higher response quality. During validation, corresponding to the upper-level problem, the performance of the fine-tuned LLM is evaluated on high-quality prompt-response data. Specifically, the datasets are constructed as follows: for each response data in HelpSteer, we compute its average score, and include the corresponding prompt-response pair in the high-quality set if $s \ge 2.5$, or in the low-quality set if $s \le 2$ (the only validation set merely include high-quality validation data). Llama-3.2-3B-Instruct is trained using LoRA technique with $\text{rank}=8$, and all data curation experiments are conducted on a cluster of 2 NVIDIA H200 GPUs (approximately 140GB each) using PyTorch's DistributedDataParallel.

The hyperparameter selection is detailed as follows. The batch size is set to $32$. For a fair comparison, we fix the total number of iterations to $T = 20,000$, which implies: (i) for single-loop methods, the algorithm runs for $20,000$ rounds; and (ii) for double-loop methods, the product of inner-loop steps and outer-loop steps equals $20,000$. Additionally, we simplify our method by using fixed values for $K$, $N$, and $\mu$ to ensure fairness in the comparison. Accordingly, we uniformly select $100$ iterations to report. The learning rates for all algorithms are set to $10^{-5}$. For \algn, we set $T=100$, $K=200$, $\mu=0.1$. We set $\Delta_K = 10$, and other PGD parameters are set to $10^{-2}$ as our default setup. For these baselines, the detailed setups are specified as follows. For MEHA, we examined additional learning rate $\eta \in \{0.1, 2.0\}$ and penalty coefficient $c \in \{0.1, 1, 2, 5, 10\}$. For SUN-DSBO-SE, we examined proximal parameter $\gamma \in \{0.01, 0.05, 0.1\}$ and penalty parameter $\mu_0 \in \{0.01, 0.1, 1, 10\}$. For PBGD, we examined the penalty coefficient in $\{0.1, 1, 10\}$. For GALET, we also tested larger inner-loop step counts $(10, 20)$, but these configurations resulted in numerical errors; we therefore used the best stable configuration with both inner-loop counts set to $5$. The reported baseline performances correspond to the best-performing configurations among those tested rather than unfavorable or arbitrarily selected settings. Each experiment is repeated $5$ times, and the corresponding standard error bars are shown in all figures. We also apply an exponential moving average (EMA) to enhance visibility.

The lower-level performances of all methods are shown in \Cref{fig:baseline-app}. As noted earlier, GALET finds some seemingly favorable solutions for the lower-level problem (Its loss values also decrease from around $8$, and the seemingly immediate convergence observed in the figure is due to the use of EMA). Nevertheless, when considered alongside \Cref{fig:baseline}, it is clear that these solutions are actually suboptimal in general. In stark contrast, MEHA, SUN-DSBO-SE, and our \alg not only converge during training, but also performs consistently well in the validation process. Among them, our \alg is the most efficient method, achieving convergence in the shortest running time.

\begin{figure}[t]
    \centering
    \begin{subfigure}[t]{0.425\linewidth}
        \centering
        \includegraphics[width=\textwidth]{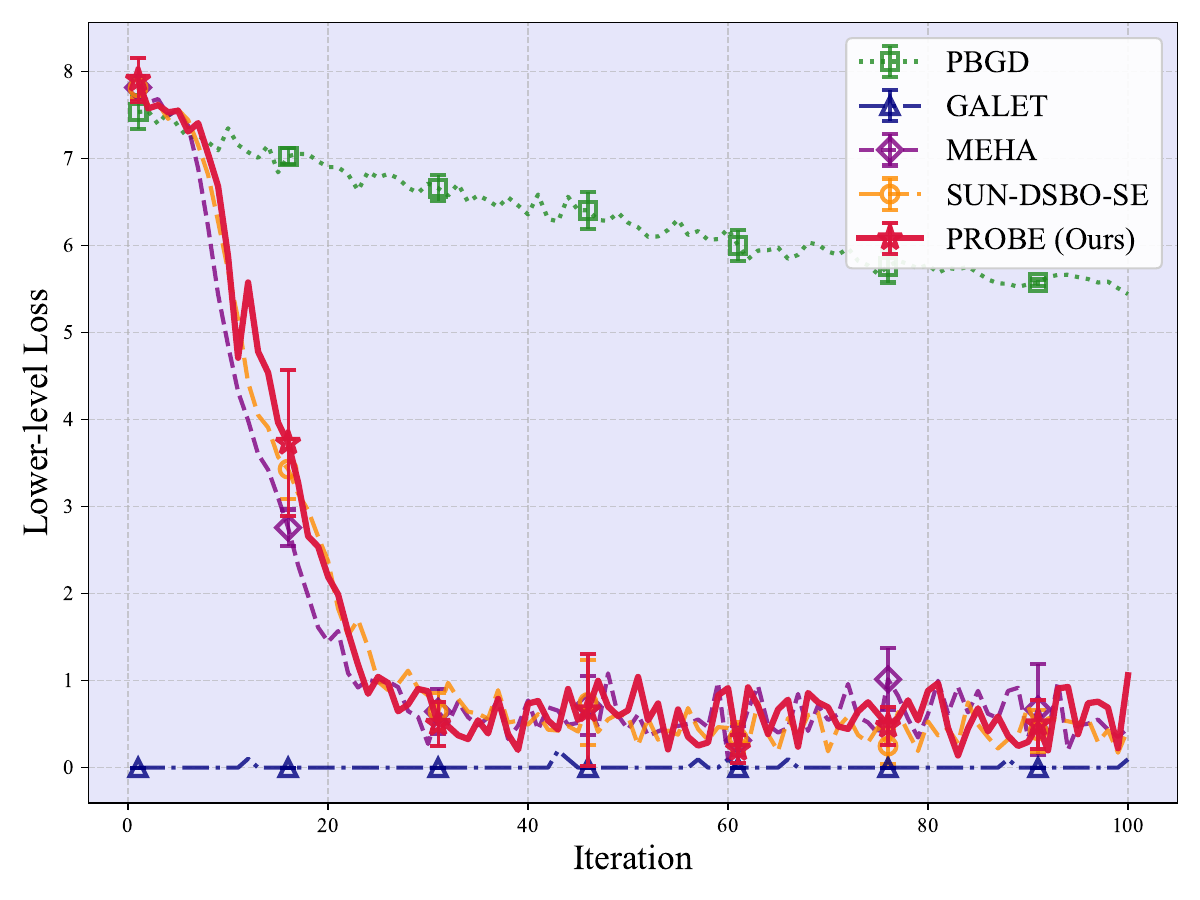}
        \vspace{-1.5em}
        \caption{Loss vs Iteration.}
        \label{fig:baseline-ll-iteration}
    \end{subfigure}
    \hfill
    \begin{subfigure}[t]{0.425\linewidth}
        \centering
        \includegraphics[width=\textwidth]{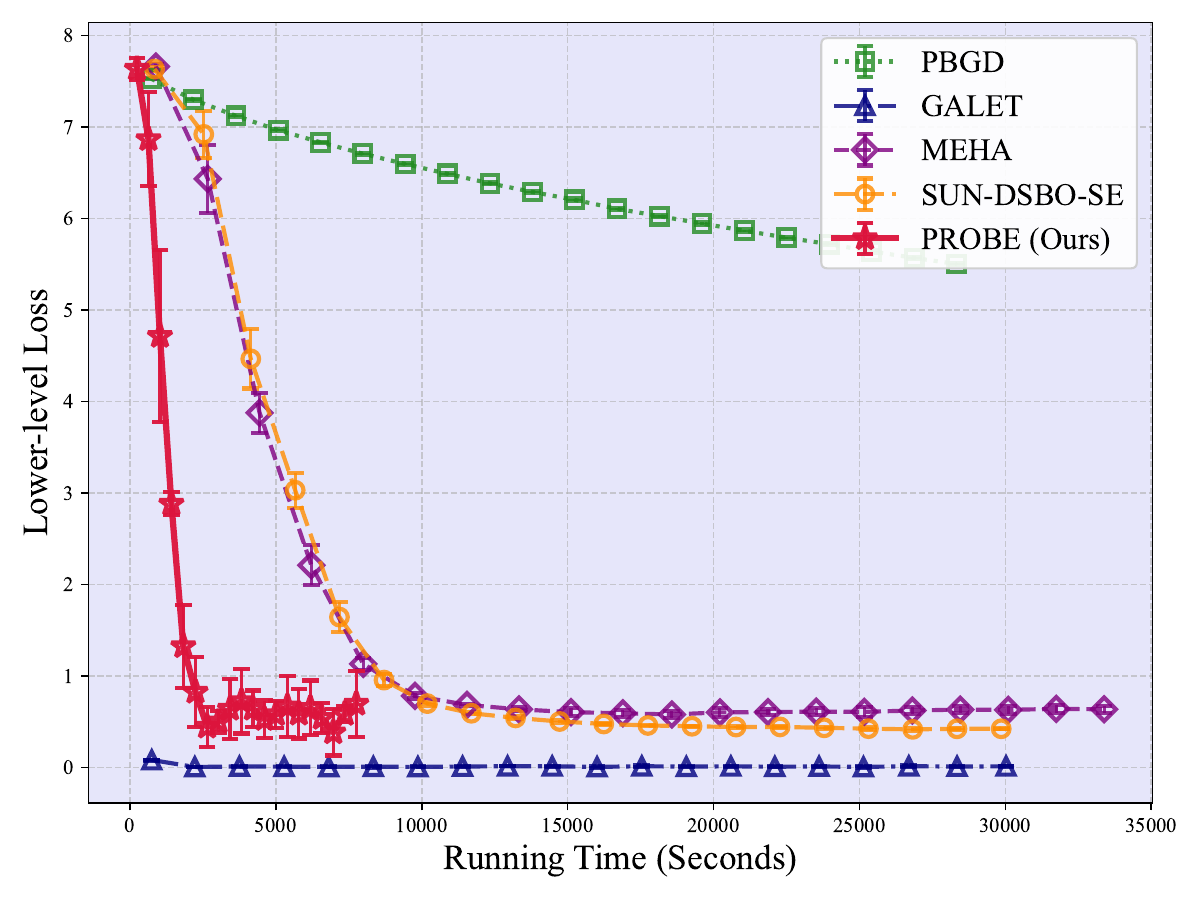}
        \vspace{-1.5em}
        \caption{Loss vs Running time.}
        \label{fig:baseline-ll-time}
    \end{subfigure}
    \caption{Baseline (lower-level) comparison.}
    \label{fig:baseline-app}
\end{figure}

\section{Proof of Main Result}\label{sec:app-proof}

\begin{theorem}[Formally restated \Cref{thm:main}]
    By selecting parameters as follows:
    \begingroup
    \allowdisplaybreaks[4]
    \begin{align*}
        & \sigma_t=\Theta\left((t+1)^{-p}\right), \,\,\,\,\,\, \epsilon_t = \frac{\sigma_t^4}{\rho}, \,\,\,\,\,\, \mu_t = \sqrt{\rho\epsilon_t} + \sigma_t, \,\,\,\,\,\, \alpha_t=\Theta\left((t+1)^{-q}\right), \,\,\,\,\,\, \beta_t = \Theta\left(\frac{1}{\ell}\right), \\
        & N_t = \widetilde{\Theta}((t+1)^{\frac{p}{2}}), \,\,\,\,\,\, K_t = \widetilde{\Theta}((t+1)^{8p}), \\
        & \delta'\in(0,\frac{1}{T}), \,\,\,\,\,\, \Delta_{g,t} \ge g(x_t,y_t)-\inf_y g(x_t,y), \,\,\,\,\,\, \chi_t=3\max\left\{\log\left(\frac{q\ell\Delta_{g,t}}{\epsilon_t^2\delta'}\right), 4\right\}, \\
        & r_t = \Theta\left(\frac{\epsilon_t}{\ell\chi_t^2}\right), \,\,\,\,\,\, \varrho_t = \Theta\left(\frac{\epsilon_t}{\chi_t^2}\right), \,\,\,\,\,\, \Delta_{G,t} = \Theta\left(\frac{\epsilon_t^{1.5}}{\chi_t^3\rho^{0.5}}\right), \,\,\,\,\,\, \Delta_{K,t} = \Theta\left(\frac{\ell\chi_t}{\rho^{0.5}\epsilon_t^{0.5}}\right),
    \end{align*}
    \endgroup
    for any $\delta = T\delta'\in(0,1)$, $p\in(0,\frac{1}{3})$, $3p \le q < 1$, the sequence $\{x_t,y_t\}_t$ generated by \Cref{alg:main} satisfies the following convergence guarantee w.p. $1-\delta$:
    \begin{align*}
        \min_{\frac{T}{2}\le t\le T} \Pi(x_t, y_t) = \mathcal{O}\left( \frac{1}{T^{1-q}} + \frac{1}{T^{2p}} + \frac{1}{T^{\frac{4}{5}}} \right).
    \end{align*}
    When selecting $p=\frac{1}{5}$ and $q=\frac{3}{5}$, we have $\min_{\frac{T}{2}\le t\le T} \Pi(x_t, y_t) = \mathcal{O}\left( T^{-\frac{2}{5}} \right)$.
\end{theorem}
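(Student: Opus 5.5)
The plan follows the three-step outline in the proof sketch after \Cref{thm:main}, done on the event where every PGD call succeeds. Applying \Cref{lemma:PGD} with $\delta'<1/T$ and a union bound over the $T$ outer iterations, with probability at least $1-T\delta'=1-\delta$ each $y_{t+1}$ is an $\epsilon_t$-SOSP of $g(x_t,\cdot)$. With $\epsilon_t=\sigma_t^4/\rho$ this needs $K_t=\widetilde{\mathcal{O}}(\epsilon_t^{-2})=\widetilde{\Theta}((t+1)^{8p})$ inner steps.

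On this event, $\nabla^2_{yy}g(x_t,y_{t+1})+\mu_tI\succeq\sigma_tI$. By Hessian Lipschitzness this lower bound persists (up to a constant factor) on a ball of radius $r'=\Theta(\sigma_t/\rho)$. Hence $y^*_{\mu_t}(x_t)$ is well defined and unique, and $\|y_{t+1}-y^*_{\mu_t}(x_t)\|\le \|\nabla_y g(x_t,y_{t+1})\|/\sigma_t\lesssim\epsilon_t/\sigma_t=\sigma_t^3/\rho$.

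Next I would establish the regularity of $\Phi_{\mu}$ with the usual LLSC calculus at strong-convexity modulus $\sigma_t$:
\begin{itemize}
\item $y^*_\mu$ is $(\ell/\sigma_t)$-Lipschitz in $x$;
\item $\nabla\Phi_{\mu_t}$ is $L_t$-Lipschitz with $L_t=\Theta(\sigma_t^{-3})$;
\item $\|\partial_\mu y^*_\mu\|\lesssim \|y^*_\mu-\tilde y\|/\sigma_t$, which gives a bound on the $\mu$-drift $|\Phi_{\mu_{t+1}}(x)-\Phi_{\mu_t}(x)|\lesssim \nu|\mu_{t+1}-\mu_t|\cdot\mathrm{poly}(\sigma_t^{-1})$.
\end{itemize}
The drift term is summable because $|\mu_{t+1}-\mu_t|=\mathcal{O}((t+1)^{-p-1})$.

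For the adjoint, the CG system has condition number $\kappa_t=\mathcal{O}(\ell/\sigma_t)$. I would use the warm start to get a contraction
\[
\|v_{t+1}-v_t^*\|\le \theta_t\|v_t-v_t^*\|,\qquad \theta_t=\bigl(1-1/\sqrt{\kappa_t}\bigr)^{N_t},
\]
where $v_t^*$ is the exact solution at $(x_t,y_{t+1},\mu_t)$. With $N_t=\widetilde{\Theta}(\sqrt{\kappa_t})=\widetilde{\Theta}((t+1)^{p/2})$ we get $\theta_t\le 1/2$. I would then bound $\|v_t^*-v_{t-1}^*\|$ by the movement of $(x,y,\mu)$. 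Together with the $y$ error, this controls the hypergradient error
\[
e_t:=\|\widehat\nabla\Phi_{\mu_t}(x_t)-\nabla\Phi_{\mu_t}(x_t)\|\lesssim \ell\|y_{t+1}-y^*_{\mu_t}\|\,\mathrm{poly}(\sigma_t^{-1})+\ell\|v_{t+1}-v_t^*\|.
\]

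For Step 2, I would apply the descent lemma with $\alpha_t\le 1/(2L_t)$, which is what $q\ge3p$ guarantees. This gives
\[
\Phi_{\mu_t}(x_{t+1})\le\Phi_{\mu_t}(x_t)-\tfrac{\alpha_t}{2}\|\nabla\Phi_{\mu_t}(x_t)\|^2+\tfrac{\alpha_t}{2}e_t^2 .
\]
Adding the $\mu$-drift gives the one-step change of $\Phi_{\mu_{t+1}}(x_{t+1})-\Phi_{\mu_t}(x_t)$. One subtlety is that the anchor $\tilde y=y_{t+1}$ changes with $t$, so $\Phi_{\mu_t}$ is itself anchor-dependent. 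I would handle this by bounding the induced change through $\|y^*_{\mu_{t+1}}(x_{t+1})-y^*_{\mu_t}(x_t)\|$, as planned in Step 1. I would also use boundedness of $f$ differences over the window (Lipschitz $f$ with $\nu$) as a fallback.

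For Step 3, I would form the Lyapunov function $V_t=\Phi_{\mu_t}(x_t)+c_t\|v_t-v^*_{t-1}\|^2$, with $c_t$ chosen so that the adjoint cross terms are absorbed by the CG contraction. Summing over $t\in[T/2,T]$ and dividing by $\sum\alpha_t\asymp T^{1-q}$ yields
\[
\min_t\|\nabla\Phi_{\mu_t}(x_t)\|^2\lesssim \frac{V_{T/2}-V_{T}+\sum_t(\text{drift})}{T^{1-q}}+\max_t e_t^2 .
\]
The first term gives $T^{-(1-q)}$. The residual error, scaling like $\sigma_t^2$, gives $T^{-2p}$. The $\epsilon_t=\sigma_t^4/\rho$ summand of $\Pi$ gives $T^{-4p}$, which together with the remaining inner-loop error terms I would fold into the $T^{-4/5}$ term. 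Balancing $1-q=2p$ with $q=3p$ gives $(p,q)=(1/5,3/5)$ and the rate $T^{-2/5}$.

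The main obstacle I expect is Step 1's control of the warm-started adjoint and the anchor/$\mu$ drift. The local strong-convexity modulus $\sigma_t$ decays, and the anchor moves, so the mismatch terms carry negative powers of $\sigma_t$. Showing that $\alpha_t\sigma_t^{-3}\lesssim1$ keeps these summable, and that $y_{t+1}$ stays inside the radius-$\sigma_t/\rho$ ball where the implicit function is defined, is where the exponent constraints $p<1/3$ and $q\ge 3p$ must be exploited carefully.
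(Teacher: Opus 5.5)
Your plan reproduces the paper's proof essentially step for step. The shared ingredients are: the union bound over the $T$ PGD calls; local $\Theta(\sigma_t)$-strong convexity of $g(x_t,\cdot)+\frac{\mu_t}{2}\|\cdot-y_{t+1}\|^2$ near $y_{t+1}$; the tracking bound $\|y_{t+1}-y^*_{\mu_t}(x_t)\|=\mathcal{O}(\sigma_t^3)$ and the $\mu$-drift (Part~$\mathcal{A}$); warm-started CG with $N_t=\widetilde{\Theta}(\sqrt{\kappa_t})$ (Part~$\mathcal{B}$); $L_t=\Theta(\sigma_t^{-3})$ with $q\ge 3p$ (Part~$\mathcal{C}$); and a Lyapunov function $\Phi_{\mu_t}(x_t)$ plus a weighted adjoint tracking error, telescoped over $[T/2,T]$ (Part~$\mathcal{D}$). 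The one place you go beyond the paper is the anchor dependence of $\Phi_{\mu_t}$. That is a genuine gap, and the fixes you propose do not close it.

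Every step that crosses iterations compares objects with different anchors. $V_{t+1}$ contains $\Phi_{\mu_{t+1}}(x_{t+1})$ anchored at $y_{t+2}$, but your descent step and $\mu$-drift bound only control quantities sharing the anchor $y_{t+1}$. In your adjoint step, the $y$-part of the movement of the CG target is $\|y_{t+1}-y_t\|$, amplified by $\sigma_t^{-2}$.

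All of these are governed by the distance between consecutive PGD outputs, and nothing in your Step~1 bounds it; Step~1 only gives same-anchor estimates (tracking error, Lipschitzness in $x$, drift in $\mu$). This distance need not be small. PGD is rerun at a new $x_{t+1}$ with the tighter tolerance $\epsilon_{t+1}<\epsilon_t$, so it may find negative curvature that was previously tolerated (e.g.\ $\lambda_{\min}$ between $-\sigma_t^2$ and $-\sigma_{t+1}^2$), escape, and settle far from $y_{t+1}$. Your $\nu$-Lipschitz fallback then charges $\mathcal{O}(\nu\|y_{t+2}-y_{t+1}\|)$ per step. Summed over $[T/2,T]$ this is not $o(T^{1-q})$ in general, so the telescoped bound need not vanish.

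The paper's proof does not resolve this either:
Part~$\mathcal{A}$ fixes the anchor $y_{t+1}$ ``for any fixed $t$''. Part~$\mathcal{C}$ applies the descent lemma to $\Phi_{\mu_{t+1}}$ with that anchor, and Part~$\mathcal{D}$ silently identifies the result with the term in $V_{t+1}$. Part~$\mathcal{B}$ likewise switches anchors between its $x$-shift and $\mu$-shift terms. So matching the paper means adopting that frozen-anchor convention. A complete proof needs an extra ingredient, e.g.\ a hypothesis keeping consecutive PGD outputs in one locally strongly convex basin with anchor drift summable against the step sizes, or a potential that does not depend on the anchor.

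Two smaller points. First, the CG contraction only cancels the $\|v_t-\cdot\|^2$ terms; it does not absorb the $x$-shift input $\mathcal{O}(\sigma_t^{-6}\alpha_{t-1}^2)\|\widehat{\nabla}\Phi_{\mu_{t-1}}(x_{t-1})\|^2$ in the adjoint recursion. The paper keeps the extra $-\frac{\alpha_t}{4}\|\widehat{\nabla}\Phi_{\mu_t}(x_t)\|^2$ from the descent lemma (available because $\alpha_t\le 1/(2L_t)$) and takes the step-size constant small. Your descent inequality drops that term. Also, because $e_t^2$ contains this input, it is controlled only in the $\alpha_t$-weighted sum, not through $\max_t e_t^2$. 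Second, the $\epsilon_t$ part of $\Pi$ is $\mathcal{O}(T^{-4p})\le\mathcal{O}(T^{-2p})$, so it folds into the $T^{-2p}$ term. Folding it into $T^{-4/5}$ is only valid for $p\ge 1/5$.
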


\begin{proof}
    We prove this result with several sub-steps.

    \textbf{Part $\mathcal{A}$.} We first control $\| v_{\mu_{t+1}}^*(x_t) - v_{\mu_t}^*(x_t) \|$. To begin with, we define:
    \begin{equation*}
        v_\mu^*(x_t) := - \left[ \nabla_{yy}^2 g(x_t, y_{\mu}^*(x_t)) + \mu I_q \right]^{-1} \nabla_y f(x_t, y_{\mu}^*(x_t)),
    \end{equation*}
    For any fixed $t\ge0$, let $H_\mu = \nabla_{yy}^2 g(x_t, y_\mu^*(x_t)) + \mu I_q$ and $w_\mu = \nabla_y f(x_t, y_\mu^*(x_t))$. Then, we have:
    \begin{equation}\label{eq:pf_1}
        \begin{aligned}
            \|v_{\mu_{t+1}}^*(x_t) - v_{\mu_t}^*(x_t)\| & = \|H_{\mu_{t+1}}^{-1} w_{\mu_{t+1}} - H_{\mu_t}^{-1} w_{\mu_t}\| \\
            & \le \|H_{\mu_{t+1}}^{-1} (w_{\mu_{t+1}} - w_{\mu_t})\| + \|(H_{\mu_{t+1}}^{-1} - H_{\mu_t}^{-1}) w_{\mu_t}\| \\
            & \overset{\flat}{\le} \frac{\ell}{\sigma_{t+1}} \|y_{\mu_{t+1}}^*(x_t) - y_{\mu_t}^*(x_t)\| + \|H_{\mu_{t+1}}^{-1}\| \cdot \|H_{\mu_t} - H_{\mu_{t+1}}\| \cdot \|H_{\mu_t}^{-1}\| \cdot \nu,
        \end{aligned}
    \end{equation}
    where $\flat$ is due to \Cref{ass:smoothness} and $A^{-1}-B^{-1} = A^{-1}(B-A)B^{-1}$. For the second term, we have:
    \begin{equation*}
        \|H_{\mu_t} - H_{\mu_{t+1}}\| \le \rho \|y_{\mu_t}^*(x_t) - y_{\mu_{t+1}}^*(x_t)\| + |\mu_t - \mu_{t+1}|.
    \end{equation*}

    Thus, to tackle \Cref{eq:pf_1}, we need to control the gap $\|y_{\mu_t}^*(x_t) - y_{\mu_{t+1}}^*(x_t)\|$. To this end, we first define the following supporting function for any fixed $t$:
    \begin{equation*}
        h_{\mu}(y) = g(x_t, y) + \frac{\mu}{2} \| y - y_{t+1} \|^2.
    \end{equation*}
    Since our Alg.~\ref{alg:PGD} ensures that w.p. $1-\delta'$, $y_{t+1}$ is an $\epsilon_t$-SOSP, we know that $\|\nabla_y g(x_t, y_{t+1})\| \le \epsilon_t$ and $\lambda_{\min}(\nabla_{yy}^2 g(x_t, y_{t+1})) \ge -\sqrt{\rho\epsilon_t}$. According to our parameter selection, we can get $\mu_t = \Theta(\sigma_t)$ and $\sqrt{\rho\epsilon_t} = \sigma_t^2$. Accordingly, we have:
    \begin{align*}
        &\lambda_{\min}(\nabla^2 h_{\mu_t}(y_{t+1})) \ge \mu_t - \sqrt{\rho\epsilon_t} = := \iota_t, \\
        &\lambda_{\min}(\nabla^2 h_{\mu_{t+1}}(y_{t+1})) \ge \mu_{t+1} - \sqrt{\rho\epsilon_t} := \tilde{\iota}_{t+1}.
    \end{align*}
    Note that $\sigma_t = \Theta((t+1)^{-p})$ with $p\in(0,\frac{1}{3})$, we can easily get $\iota_t = \Theta(\sigma_t)$ and $\tilde{\iota}_{t+1} = \Theta(\sigma_t)$. Therefore, within a sufficiently small ball $\mathcal{B}_{y_{t+1}}(\frac{\iota_t}{2\rho})$, we have $\lambda_{\min}(\nabla^2 h_{\mu_t}(y)) \ge \frac{\iota_t}{2}$. Thus, noting that $\nabla h_{\mu_t}(y_{\mu_t}^*(x_t))=0$, we can obtain:
    \begin{align*}
        & \|y_{\mu_t}^*(x_t) - y_{t+1}\| \le \frac{2}{\iota_t}\|\nabla h_{\mu_t}(y_{t+1})\| = \frac{2}{\iota_t}\|\nabla_y g(x_t, y_{t+1})\| \le \frac{2\epsilon_t}{\iota_t} = \frac{2\sigma_t^4 / \rho}{\iota_t}, \\
        \implies & \|y_{\mu_t}^*(x_t) - y_{t+1}\| = \mathcal{O}(\sigma_t^3).
    \end{align*}

    Similarly, within a small ball centered at $y_{t+1}$, $h_{\mu_{t+1}}$ remains $\frac{\tilde{\iota}_{t+1}}{2}$-strong convex, which implies:
    \begin{align*}
        \frac{\tilde{\iota}_{t+1}}{2} \|y_{\mu_{t+1}}^*(x_t) - y_{\mu_t}^*(x_t)\|^2 & \le \langle \nabla h_{\mu_{t+1}}(y_{\mu_t}^*(x_t)) - \nabla h_{\mu_{t+1}}(y_{\mu_{t+1}}^*(x_t)), y_{\mu_t}^*(x_t) - y_{\mu_{t+1}}^*(x_t) \rangle \\
        & \overset{\dagger}{=} \langle \nabla_y g(x_t, y_{\mu_t}^*(x_t)) + \mu_{t+1}(y_{\mu_t}^*(x_t) - y_{t+1}), y_{\mu_t}^*(x_t) - y_{\mu_{t+1}}^*(x_t) \rangle \\
        & \overset{\ddagger}{=} \langle (\mu_{t+1} - \mu_t)(y_{\mu_t}^*(x_t) - y_{t+1}), y_{\mu_t}^*(x_t) - y_{\mu_{t+1}}^*(x_t) \rangle \\
        & \overset{\flat}{\le} |\mu_{t+1} - \mu_t| \cdot \|y_{\mu_t}^*(x_t) - y_{t+1}\| \cdot \|y_{\mu_{t+1}}^*(x_t) - y_{\mu_t}^*(x_t)\|,
    \end{align*}
    where $\dagger$ is because $y_{\mu_{t+1}}^*(x_t)$ minimizes $h_{\mu_{t+1}}$, $\ddagger$ is due to $\nabla_y g(x_t, y_{\mu_t}^*(x_t)) + \mu_t(y_{\mu_t}^*(x_t) - y_{t+1}) = 0$, and $\flat$ is induced by the Cauchy-Schwarz inequality. Thus, we have:
    \begin{equation*}
        \|y_{\mu_{t+1}}^*(x_t) - y_{\mu_t}^*(x_t)\| \le \frac{2|\mu_{t+1} - \mu_t|}{\tilde{\iota}_{t+1}} \|y_{\mu_t}^*(x_t) - y_{t+1}\|.
    \end{equation*}
    Substituting the bound $\|y_{\mu_t}^*(x_t) - y_{t+1}\| = \mathcal{O}(\sigma_t^3)$ and applying the rigorous lower bound $\tilde{\iota}_{t+1}=\Theta(\sigma_t)$, we can get:
    \begin{equation*}
        \|y_{\mu_{t+1}}^*(x_t) - y_{\mu_t}^*(x_t)\| = \mathcal{O}(\sigma_t^2) |\mu_t - \mu_{t+1}|.
    \end{equation*}

    We then substitute the above results into \Cref{eq:pf_1} to get:
    \begin{align*}
        & \|v_{\mu_{t+1}}^*(x_t) - v_{\mu_t}^*(x_t)\| \le \frac{\ell}{\sigma_{t+1}} \|y_{\mu_{t+1}}^*(x_t)\! -\! y_{\mu_t}^*(x_t)\| + \|H_{\mu_{t+1}}^{-1}\| \cdot \|H_t - H_{t+1}\| \cdot \|H_{\mu_t}^{-1}\| \cdot \nu \\
        \implies & \|v_{\mu_{t+1}}^*(x_t) - v_{\mu_t}^*(x_t)\| = \left( \frac{\ell}{\sigma_{t+1}} + \frac{\nu\rho}{\sigma_t\sigma_{t+1}} \right) \mathcal{O}(\sigma_t^2) |\mu_t - \mu_{t+1}| + \frac{\nu}{\sigma_t\sigma_{t+1}}|\mu_t - \mu_{t+1}|.
    \end{align*}
    This indicates:
    \begin{align*}
        \|v_{\mu_{t+1}}^*(x_t) - v_{\mu_t}^*(x_t)\| & = \mathcal{O}\left(\frac{1}{\sigma_t\sigma_{t+1}}|\mu_t - \mu_{t+1}|\right) = \mathcal{O}\left( \frac{(t+1)^{-p-1}}{(t+1)^{-p}(t+2)^{-p}} \right) \\
        & = \mathcal{O}\left( \frac{(t+1)^{-p-1}}{(t+1)^{-2p}2^{-p}} \right) = \mathcal{O}\big((t+1)^{p-1}\big).
    \end{align*}

    \textbf{Part $\mathcal{B}$.} We then consider the error introduced by the Conjugate Gradient step. Recall that $v_{\mu_t}^*(x_t) = [\nabla_{yy}^2 g(x_t, y_{\mu_t}^*(x_t)) + \mu_t I_q]^{-1} \nabla_y f(x_t, y_{\mu_t}^*(x_t))$, and we now define $E_t := \|v_t - v_{\mu_{t-1}}^*(x_{t-1})\|^2$. Note that Conjugate Gradient actually solves a linear system $H_t v = b_t$, where $H_t = \nabla_{yy}^2 g(x_t, y_{t+1}) + \mu_t I_q$ and $b_t = \nabla_y f(x_t, y_{t+1})$ at each step $t$. We correspondingly denote the exact solution to this linear system as $v_t^\circ = H_t^{-1} b_t$. According to \cite{grazzi2020iteration}, by selecting $N_t = \widetilde{\Theta}(\sqrt{(t+1)^p})$, we can obtain:
    \begin{align*}
        & \|v_{t+1} - v_t^\circ\| \le 2(t+1)^{\frac{p}{2}} \left( \frac{(t+1)^{\frac{p}{2}} - 1}{(t+1)^{\frac{p}{2}} + 1} \right)^{N_t} \|v_t - v_t^\circ\|, \,\,\, \implies \,\,\, \|v_{t+1} - v_t^\circ\| \le \frac{1}{\sqrt{2}} \|v_t - v_t^\circ\|.
    \end{align*}
    Therefore, we have:
    \begin{align*}
        & \sqrt{E_{t+1}} = \|v_{t+1} - v_{\mu_t}^*(x_t)\| \le \|v_{t+1} - v_t^\circ\| + \|v_t^\circ - v_{\mu_t}^*(x_t)\|.
    \end{align*}
    Thus, we need to bound $\|v_t - v_t^\circ\|$, which can be controlled by:
    \begin{align*}
        \|v_t - v_t^\circ\| \le \sqrt{E_t} + \underbrace{\|v_{\mu_{t-1}}^*(x_{t-1}) - v_{\mu_{t-1}}^*(x_t)\|}_{\Circled{1}} + \underbrace{\|v_{\mu_{t-1}}^*(x_t) - v_{\mu_t}^*(x_t)\|}_{\Circled{2}} + \underbrace{\|v_{\mu_t}^*(x_t) - v_t^\circ\|}_{\Circled{3}}.
    \end{align*}
    We now control each of these terms. For \Circled{1}, we first note that:
    \begin{align*}
        & \|v_{\mu}^*(x) - v_{\mu}^*(x')\| \\
        \le &  \left\| \left[ \nabla_{yy}^2 g(x, y_{\mu}^*(x)) + \mu I_q \right]^{-1} \big( \nabla_y f(x, y_{\mu}^*(x)) - \nabla_y f(x', y_{\mu}^*(x')) \big) \right\| \\
        & + \left\| \left( \left[ \nabla_{yy}^2 g(x, y_{\mu}^*(x)) + \mu I_q \right]^{-1} - \left[ \nabla_{yy}^2 g(x', y_{\mu}^*(x')) + \mu I_q \right]^{-1} \right) \nabla_y f(x', y_{\mu}^*(x')) \right\| \\
        \le & \left( \frac{\ell}{\sigma} + \frac{\nu\rho}{\sigma^2} \right) \left( 1 + \frac{\ell}{\sigma} \right) \|x-x'\| = \Theta (\sigma^{-3}) \|x-x'\|,
    \end{align*}
    meaning that its Lipschitz continuity coefficient is in the order of $\Theta (\sigma^{-3})$. Therefore, we have:
    \begin{align*}
        \Circled{1} & \le \Theta (\sigma_t^{-3}) \alpha_{t-1} \| \widehat{\nabla}\Phi_{\mu_{t-1}}(x_{t-1}) \| \,\,\, \implies \,\,\, \Circled{1} = \mathcal{O}\big((t+1)^{3p-q}\big)\| \widehat{\nabla}\Phi_{\mu_{t-1}}(x_{t-1}) \|.
    \end{align*}
    For \Circled{2}, we can follow Part $\mathcal{A}$ to get $\Circled{2} = \mathcal{O}((t+1)^{p-1})$.

    For \Circled{3}, we have:
    \begin{align*}
        \Circled{3} & \le \left\| \left[ \nabla_{yy}^2 g(x_t, y_{\mu_t}^*(x_t)) + \mu_t I_q \right]^{-1} \big( \nabla_y f(x_t, y_{\mu_t}^*(x_t)) - \nabla_y f(x_t, y_{t+1}) \big) \right\| \\
        & \,\,\,\,\,\, + \left\| \left( \left[ \nabla_{yy}^2 g(x_t, y_{\mu_t}^*(x_t)) + \mu_t I_q \right]^{-1} - H_t^{-1} \right) \nabla_y f(x_t, y_{t+1}) \right\|, \\
        \implies \Circled{3} & = \frac{\ell}{\sigma_t}\mathcal{O}(\sigma_t^3) + \frac{\nu\rho}{\sigma_t^2}\mathcal{O}(\sigma_t^3) = \mathcal{O}\big( (t+1)^{-p} \big).
    \end{align*}

    Substituting them back, we can get:
    \begin{align*}
        & \|v_t - v_t^\circ\| = \sqrt{E_t} + \mathcal{O}\big((t+1)^{3p-q}\big)\| \widehat{\nabla}\Phi_{\mu_{t-1}}(x_{t-1}) \| + \mathcal{O}\big( (t+1)^{-p} \big), \\
        \implies & \sqrt{E_{t+1}} = \frac{1}{\sqrt{2}} \sqrt{E_t} + \mathcal{O}\big((t+1)^{3p-q}\big)\| \widehat{\nabla}\Phi_{\mu_{t-1}}(x_{t-1}) \| + \mathcal{O}\big( (t+1)^{-p} \big).
    \end{align*}
    According to the Young's inequality with constant $1/2$, we have:
    \begin{align*}
        E_{t+1} & = \left(1+\frac{1}{2}\right) \left( \frac{1}{\sqrt{2}} \sqrt{E_t} \right)^2 + (1+2) \left( \mathcal{O}\big((t+1)^{3p-q}\big)\| \widehat{\nabla}\Phi_{\mu_{t-1}}(x_{t-1}) \| + \mathcal{O}\big( (t+1)^{-p} \big) \right)^2 \\
        & = \frac{3}{4} E_t + \mathcal{O}\big((t+1)^{6p-2q}\big)\| \widehat{\nabla}\Phi_{\mu_{t-1}}(x_{t-1}) \|^2 + \mathcal{O}\big( (t+1)^{-2p} \big).
    \end{align*}

    \textbf{Part $\mathcal{C}$.} In this part, we control the shift $\Phi_{\mu_{t+1}}(x_{t+1}) - \Phi_{\mu_t}(x_t)$. We first compute the smoothness coefficient of $\Phi_\mu$ as follows. According to \Cref{ass:smoothness}, for any $x,x'$ we have:
    \begin{align*}
        & \| \nabla \Phi_\mu(x) - \nabla \Phi_\mu(x') \| \\
        \le & \| \nabla_x f(x, y_{\mu}^*(x)) - \nabla_x f(x', y_{\mu}^*(x')) \| \\
        & + \left\| \big( \nabla_{xy}^2 g(x, y_{\mu}^*(x)) - \nabla_{xy}^2 g(x', y_{\mu}^*(x')) \big) v_\mu^*(x) \right\| + \left\| \nabla_{xy}^2 g(x', y_{\mu}^*(x')) \big(v_\mu^*(x) - v_\mu^*(x')\big) \right\| \\
        \le & \big(\ell + \|v_\mu^*(x)\| \cdot \rho\big) \big( \|x-x'\| + \|y_{\mu}^*(x)-y_{\mu}^*(x')\| \big) + \ell \| v_\mu^*(x) - v_\mu^*(x') \| \\
        \le & \left(\ell + \frac{\nu}{\sigma} \cdot \rho\right) \big( \|x-x'\| + \|y_{\mu}^*(x)-y_{\mu}^*(x')\| \big) \\
        & + \ell \left\| \left[ \nabla_{yy}^2 g(x, y_{\mu}^*(x)) + \mu I_q \right]^{-1} \big( \nabla_y f(x, y_{\mu}^*(x)) - \nabla_y f(x', y_{\mu}^*(x')) \big) \right\| \\
        & + \ell \left\| \left( \left[ \nabla_{yy}^2 g(x, y_{\mu}^*(x)) + \mu I_q \right]^{-1} - \left[ \nabla_{yy}^2 g(x', y_{\mu}^*(x')) + \mu I_q \right]^{-1} \right) \nabla_y f(x', y_{\mu}^*(x')) \right\| \\
        \le & \left(\ell + \frac{\nu \rho}{\sigma} + \frac{\ell^2}{\sigma} + \frac{\nu \ell \rho}{\sigma^2}\right) \left( \|x-x'\| + \frac{\ell}{\sigma} \|x-x'\| \right) \\
        \le & \left(\ell + \frac{\nu \rho}{\sigma} + \frac{\ell^2}{\sigma} + \frac{\nu \ell \rho}{\sigma^2}\right)\left(1 + \frac{\ell}{\sigma}\right) \|x-x'\| \triangleq L_\mu \|x-x'\|.
    \end{align*}
    We denote $L_t:=L_{\mu_t}$. Given the $L_{t+1}$-smoothness of $\Phi_{\mu_{t+1}}$ with $L_{t+1} = \Theta((t+1)^{3p})$, and applying the update rule $x_{t+1} = x_t - \alpha_t \widehat{\nabla}\Phi_{\mu_t}(x_t)$ with $q\ge3p$, the descent lemma yields:
    \begin{align*}
        & \Phi_{\mu_{t+1}}(x_{t+1}) - \Phi_{\mu_{t+1}}(x_t) \\
        \le & \langle \nabla\Phi_{\mu_{t+1}}(x_t), x_{t+1} - x_t \rangle + \frac{L_{t+1}}{2} \|x_{t+1} - x_t\|^2 \\
        = & -\alpha_t \langle \nabla\Phi_{\mu_{t+1}}(x_t), \widehat{\nabla}\Phi_{\mu_t}(x_t) \rangle + \frac{L_{t+1} \alpha_t^2}{2} \|\widehat{\nabla}\Phi_{\mu_t}(x_t)\|^2 \\
        \le & -\frac{\alpha_t}{2}\|\nabla\Phi_{\mu_t}(x_t)\|^2 -\frac{\alpha_t}{4}\|\widehat{\nabla}\Phi_{\mu_t}(x_t)\|^2 + \frac{\alpha_t}{2}\|\nabla\Phi_{\mu_{t+1}}(x_t) - \widehat{\nabla}\Phi_{\mu_t}(x_t)\|^2,
    \end{align*}
    where in the last inequality, we use the fact that $\|a-b\|^2 = \|a\|^2 + \|b\|^2 - 2\langle a, b\rangle$. For the last term, we can control it by:
    \begin{align*}
        \|\nabla\Phi_{\mu_{t+1}}(x_t) - \widehat{\nabla}\Phi_{\mu_t}(x_t)\| \le & \underbrace{\|\nabla_x f(x_t, y_{\mu_{t+1}}^*(x_t)) - \nabla_x f(x_t, y_{t+1})\|}_{\text{Term A}} \\
        & + \underbrace{\|\nabla_{xy}^2 g(x_t, y_{\mu_{t+1}}^*(x_t)) v_{\mu_{t+1}}^*(x_t) - \nabla_{xy}^2 g(x_t, y_{t+1}) v_{t+1}\|}_{\text{Term B}}.
    \end{align*}
    Then, we have:
    \begin{align*}
        \text{Term A} & \le \ell \|y_{\mu_{t+1}}^*(x_t) - y_{t+1}\| = \mathcal{O} \big( \|y_{\mu_{t+1}}^*(x_t) - y_{\mu_t}^*(x_t)\| + \|y_{\mu_t}^*(x_t) - y_{t+1}\| \big), \\
        \implies \text{Term A} & = \mathcal{O}(t^{-3p-1}) + \mathcal{O}(t^{-3p}) = \mathcal{O}(t^{-3p}).
    \end{align*}
    Besides, we have:
    \begin{align*}
        \text{Term B} & \le \rho \|y_{\mu_{t+1}}^*(x_t) - y_{t+1}\| \cdot \|v_{\mu_{t+1}}^*(x_t)\| + \ell (\|v_{\mu_{t+1}}^*(x_t) - v_{\mu_t}^*(x_t)\| + \sqrt{E_{t+1}}), \\
        \implies \text{Term B} & = \mathcal{O}(t^{-3p} \cdot t^p) + \mathcal{O}(t^{p-1}) + \mathcal{O}(\sqrt{E_{t+1}}) = \mathcal{O}(t^{-2p} + t^{p-1} + \sqrt{E_{t+1}}).
    \end{align*}
    Thus, we combine them to get:
    \begin{equation*}
        \|\nabla\Phi_{\mu_{t+1}}(x_t) - \widehat{\nabla}\Phi_{\mu_t}(x_t)\|^2 = \mathcal{O}(t^{-4p} + t^{2p-2} + E_{t+1}).
    \end{equation*}

    Also, we consider the the gap $|\Phi_{\mu_{t+1}}(x_t) - \Phi_{\mu_t}(x_t)|$. According to \Cref{ass:smoothness}, we have:
    \begin{align*}
        |\Phi_{\mu_{t+1}}(x_t) - \Phi_{\mu_t}(x_t)| & = | f(x_t, y_{\mu_{t+1}}^*(x_t)) - f(x_t, y_{\mu_t}^*(x_t)) | \le \nu \| y_{\mu_{t+1}}^*(x_t) - y_{\mu_t}^*(x_t)\|, \\
        \implies |\Phi_{\mu_{t+1}}(x_t) - \Phi_{\mu_t}(x_t)| & \overset{\flat}{=} \mathcal{O}(\sigma_t^2) |\mu_t - \mu_{t+1}| = \mathcal{O} \big( (t+1)^{-3p-1} \big),
    \end{align*}
    where $\flat$ can be implied by Part $\mathcal{A}$.

    We can introduce positive constants $C_1, C_1'$ to construct the following result:
    \begin{align*}
        & \Phi_{\mu_{t+1}}(x_{t+1}) - \Phi_{\mu_t}(x_t) \\
        = & [\Phi_{\mu_{t+1}}(x_{t+1}) - \Phi_{\mu_{t+1}}(x_t)] + [\Phi_{\mu_{t+1}}(x_t) - \Phi_{\mu_t}(x_t)] \\
        \le & -\frac{\alpha_t}{2}\|\nabla\Phi_{\mu_t}(x_t)\|^2 -\frac{\alpha_t}{4}\|\widehat{\nabla}\Phi_{\mu_t}(x_t)\|^2 + C_1\alpha_t E_{t+1} + C_1'\big(\alpha_tt^{-4p} + \alpha_tt^{2p-2} + t^{-3p-1}\big).
    \end{align*}

    \textbf{Part $\mathcal{D}$.} Recall Part $\mathcal{B}$, with positive constants $C_2, C_2'$, we have:
    \begin{equation*}
        E_{t+1} \le \frac{3}{4}E_t + C_2 t^{6p-2q}\| \widehat{\nabla}\Phi_{\mu_{t-1}}(x_{t-1}) \|^2 + C_2' t^{-2p}.
    \end{equation*}
    
    We construct the following Lyapunov function:
    \begin{equation*}
        V_t = \Phi_{\mu_t}(x_t) + M\alpha_t E_t,
    \end{equation*}
    where $M=3C_1$. Then, we have:
    \begin{align*}
        V_{t+1} - V_t & = \Phi_{\mu_{t+1}}(x_{t+1}) - \Phi_{\mu_t}(x_t) + M(\alpha_{t+1} E_{t+1} - \alpha_t E_t) \\
        & \le -\frac{\alpha_t}{2}\|\nabla\Phi_{\mu_t}(x_t)\|^2 -\frac{\alpha_t}{4}\|\widehat{\nabla}\Phi_{\mu_t}(x_t)\|^2 + C_1'\big(\alpha_tt^{-4p} + \alpha_tt^{2p-2} + t^{-3p-1}\big) \\
        & \,\,\,\,\,\, + (C_1\alpha_t + M\alpha_{t+1}) E_{t+1} - M\alpha_t E_t \\
        & \le -\frac{\alpha_t}{2}\|\nabla\Phi_{\mu_t}(x_t)\|^2 -\frac{\alpha_t}{4}\|\widehat{\nabla}\Phi_{\mu_t}(x_t)\|^2 + C_1'\big(\alpha_tt^{-4p} + \alpha_tt^{2p-2} + t^{-3p-1}\big) \\
        & \,\,\,\,\,\, + (C_1\alpha_t + M\alpha_{t+1}) \left( \frac{3}{4}E_t + C_2 t^{6p-2q}\| \widehat{\nabla}\Phi_{\mu_{t-1}}(x_{t-1}) \|^2 + C_2' t^{-2p} \right)  - M\alpha_t E_t \\
        & \overset{\flat}{\le} -\frac{\alpha_t}{2}\|\nabla\Phi_{\mu_t}(x_t)\|^2 -\frac{\alpha_t}{4}\|\widehat{\nabla}\Phi_{\mu_t}(x_t)\|^2 + \underbrace{4C_1C_2\alpha_t t^{6p-2q}}_{\eta_t} \|\widehat{\nabla}\Phi_{\mu_{t-1}}(x_{t-1}) \|^2 \\
        & \,\,\,\,\,\, + \underbrace{C_1''\big(t^{-4p-q} + t^{2p-2-q} + t^{-2p-q} + t^{-3p-1}\big)}_{R_t},
    \end{align*}
    where $\flat$ is due to $M=3C_1$ and $\alpha_t \ge \alpha_{t+1}$, and $C_1''$ is another constant. When $q\ge 3p$, we can select proper $\alpha_t$ to ensure $\eta_{t+1} \le \frac{1}{8}\alpha_t$, and denote $T_0 = \lceil\frac{T}{2}\rceil$. Then, we can get:
    \begin{align*}
        \sum_{t=T_0}^T \frac{\alpha_t}{2}\|\nabla\Phi_{\mu_t}(x_t)\|^2 & \le \sum_{t=T_0}^T \frac{\alpha_t}{2}\|\nabla\Phi_{\mu_t}(x_t)\|^2 - \sum_{t=T_0}^T \frac{\alpha_t}{8}\|\widehat{\nabla}\Phi_{\mu_t}(x_t)\|^2 + \eta_1 \|\widehat{\nabla}\Phi_{\mu_{T_0}}(x_0)\|^2 \\
        & \le V_{T_0} - V_{T+1} + \eta_1 \|\widehat{\nabla}\Phi_{\mu_{T_0}}(x_0)\|^2 + \sum_{t=T_0}^T R_t.
    \end{align*}

    Noting that $V_{T+1}$ is bounded below by $\inf \Phi$, the constant terms can be bounded by $\mathcal{O}(1)$. For the residual sum $\sum_{t=T_0}^T R_t$, the dominant term is $\mathcal{O}(t^{-q-2p})$. Integrating this term yields $\sum_{t=T_0}^T R_t = \mathcal{O}(T^{1-q-2p})$. Dividing both sides by $\sum_{t=T_0}^T \alpha_t = \Theta(T^{1-q})$, we finally get:
    \begin{align*}
        \min_{1 \le t \le T} \|\nabla\Phi_{\mu_t}(x_t)\|^2 & = \frac{\mathcal{O}(1) + \mathcal{O}(T^{1-q-2p})}{\Theta(T^{1-q})} = \mathcal{O}\left( \frac{1}{T^{1-q}} + \frac{1}{T^{2p}} \right).
    \end{align*}
    When selecting $p=1/5$ and $q = 3/5$, and noting $\min_{T_0 \le t \le T}\epsilon_t = \mathcal{O}(T^{-\frac{4}{5}})$ the convergence rate is $\min_{T_0 \le t \le T}\Pi(x_t,y_t) = \mathcal{O}(T^{-\frac{2}{5}})$.
\end{proof}

Given \Cref{thm:main}, we can immediately obtain the following result:
\begin{corollary}
    Whenever the lower-level local minimum satisfies the local curvature condition $\nabla_{yy}^2 g(x,y)\succeq\mu_0I_q$ for some $\mu_0>0$, the bias bound yields $\|\nabla \Phi_\mu(x) - \nabla \Phi_0(x)\| = \mathcal{O}(\mu)$. Combining this bound with the parameter selection in \Cref{thm:main} gives:
    \begin{equation*}
        \min_{\lceil T/2 \rceil \le t \le T} \|\nabla \Phi_0(x_t)\|^2 = \mathcal{O}(T^{-2/5}).
    \end{equation*}
\end{corollary}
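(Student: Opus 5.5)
We prove the corollary by a triangle inequality that compares the true hypergradient $\nabla\Phi_0$ at the iterates with the $\mu_t$-augmented hypergradient $\nabla\Phi_{\mu_t}$, whose decay is already controlled by \Cref{thm:main}. For each $t$ with $\lceil T/2\rceil\le t\le T$ we use
\begin{equation*}
\|\nabla\Phi_0(x_t)\|^2 \le 2\|\nabla\Phi_{\mu_t}(x_t)\|^2 + 2\|\nabla\Phi_{\mu_t}(x_t)-\nabla\Phi_0(x_t)\|^2 .
\end{equation*}
The first term is bounded above by $2\Pi(x_t,y_t)$, by the definition of $\Pi$.

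For the second term we reuse the bias computation from the section introducing $\Phi_\mu$. By assumption, the lower-level local minimum around which $y_{\mu_t}^*(x_t)$ lives satisfies $\nabla^2_{yy}g\succeq\mu_0 I_q$. The identity $A^{-1}-B^{-1}=A^{-1}(B-A)B^{-1}$, together with the bounds $\|\nabla^2_{xy}g\|\le\ell$ and $\|\nabla_y f\|\le\nu$ from \Cref{ass:smoothness}, gives
\begin{equation*}
\|\nabla\Phi_{\mu_t}(x_t)-\nabla\Phi_0(x_t)\|\le \frac{\ell\nu}{\mu_0^2}\,\mu_t .
\end{equation*}
This requires $\Phi_0$ to be evaluated at the same lower-level point. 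We therefore read $\nabla\Phi_0(x_t)$ as the hypergradient of the local solution map through $y_{\mu_t}^*(x_t)$, which is well defined by the implicit function theorem since the Hessian is nonsingular there.

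One refinement is needed. The bias computation above freezes the $y$-argument and compares only the inverse Hessians. In fact, $y_{\mu}^*(x)$ and the local minimizer $y_0^*(x)$ also differ. This difference is of order $\mathcal{O}(\mu\|y_0^*-y_{t+1}\|/\mu_0)$ by the strong-convexity argument of Part $\mathcal{A}$, and by Lipschitzness of $\nabla f$, $\nabla^2 g$ it contributes a further $\mathcal{O}(\mu_t)$ term. So in either reading the bias is $\mathcal{O}(\mu_t)$.

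With the parameter choices of \Cref{thm:main}, $\mu_t=\sigma_t^2+\sigma_t=\Theta(\sigma_t)=\Theta((t+1)^{-p})$. Hence the squared bias is $\mathcal{O}((t+1)^{-2p})$, which for $t\ge T/2$ is $\mathcal{O}(T^{-2p})$. Taking the minimum over $t$ gives
\begin{equation*}
\min_{\lceil T/2\rceil\le t\le T}\|\nabla\Phi_0(x_t)\|^2\le 2\min_t\Pi(x_t,y_t)+\mathcal{O}(T^{-2p}).
\end{equation*}
With $p=1/5$ and $q=3/5$, both terms are $\mathcal{O}(T^{-2/5})$.

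The main obstacle is justifying the curvature hypothesis uniformly along the iterates. We need $\nabla^2_{yy}g\succeq\mu_0 I_q$ at $y_{\mu_t}^*(x_t)$ and not only at an exact minimum. Since $y_{\mu_t}^*(x_t)$ is within $\mathcal{O}(\sigma_t^3)$ of $y_{t+1}$, which is an $\epsilon_t$-SOSP, Hessian Lipschitzness gives curvature at least $\mu_0-\mathcal{O}(\sigma_t^3)\ge\mu_0/2$ for large $t$, at the cost of replacing $\mu_0$ by $\mu_0/2$. We will state this as the precise interpretation of the corollary's assumption.
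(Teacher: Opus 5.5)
Your proposal is correct and follows the same route as the paper, which gives no separate proof. The paper just combines the main-text bias bound $\|\nabla\Phi_\mu-\nabla\Phi_0\|\le \ell\nu\mu/\mu_0^2$ with $\mu_t=\sigma_t^2+\sigma_t=\Theta(T^{-1/5})$ on $[T/2,T]$ and \Cref{thm:main}, exactly as your triangle-inequality argument does. Your extra step bounding $\|y_{\mu_t}^*(x_t)-y_0^*(x_t)\|$ via the Part~$\mathcal{A}$ strong-convexity argument fills in something the paper skips, since its bias computation evaluates both hypergradients at the same $y$.

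One small caution on your last paragraph: the curvature $\mu_0$ near $y_{t+1}$ must come from the hypothesis itself, because the $\epsilon_t$-SOSP property only gives $\lambda_{\min}\ge-\sqrt{\rho\epsilon_t}$. That reading is consistent with the paper, which simply assumes $\nabla^2_{yy}g(\tilde x,y_\mu^*(\tilde x))\succeq\mu_0 I_q$ directly.
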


\section{Details of Illustrative Example}\label{sec:app-example}

To intuitively show the rationale of our SOSP-based relaxed surrogate, we consider the following concrete example with $x,y\in\mathbb{R}$:
\begin{align*}
    & \min_{x\in\mathbb{R}^p,y\in\mathbb{R}^q} f(x,y) = (x-2)^2 + y^2\\
    \text{subject to } &\,\, y \in \mathcal{S}(x) := \argmin_y g(x,y) = \frac{1}{4}y^4 - \frac{1}{2}xy^2.
\end{align*}
As mentioned earlier, to tackle this nonconvex lower-level BLO problem, existing works instead solve a surrogate lower-level problem by finding a first-order stationary point, i.e., a point where $\nabla_y g(x,y)=0$, or by solving its equivalent formulations under certain other assumptions \citep{xiao2023generalized,shen2023penalty,liu2024moreau,ma2026sun}.

Specifically, the required derivatives are:
\begin{align*}
    \nabla_y g(x,y) = y^3-xy = y(y^2-x), \,\,\,\,\,\, \nabla_{yy}^2 g(x,y) = 3y^2-x.
\end{align*}
Setting the gradient to $0$, we obtain the following FOSP set:
\begin{align*}
    \{(x, 0), (x', \sqrt{x'}), (x', -\sqrt{x'}): \forall x\in\mathbb{R}, x'\ge 0\}.
\end{align*}
When substituting $\nabla_y g(x,y)=0$ into the upper-level problem, we obtain the following solutions: $(x^*, y^*) \in \{(2,0), (1.5, \sqrt{1.5}), (1.5, -\sqrt{1.5})\}$, where $(2,0)$ is a saddle point of $g(x,\cdot)$, and the other two serve as the true solutions. As shown in \Cref{fig:EXAMPLE}, these aforementioned methods can be entirely drawn toward the saddle point, failing to solve the original BLO problem. In contrast, our \alg escapes the saddle point and finds the optimal solution.

\section{Scope and Implications of SOSP Guarantees}\label{sec:app-saddle}

While we established a finite-time convergence rate of $\mathcal{O}(T^{-\frac{2}{5}})$ for \alg in \Cref{sec:analysis}, it remains important to clarify the class of solutions to which the method converges. In this section, we discuss our SOSP-based solutions in detail.

\textbf{The theoretical guarantee is to achieve some SOSP solution.} Our theoretical results guarantee that \alg reaches a \textit{lower-level SOSP} and thereby avoids strict saddle points. However, they do \textit{not} guarantee the selection of the ``best'' solution among multiple lower-level local minima. Although the objective values of any given collection of local minima can be compared directly, guaranteeing the selection of the best solution among all local minima is equivalent to finding a global minimum of the lower-level problem, which is computationally intractable for general nonconvex problems (NP-hard). Therefore, without additional structural assumptions, the effects of different lower-level local solutions on the upper-level problem cannot be characterized in general.

\textbf{An SOSP provides a strictly stronger stationarity guarantee than an FOSP.} The lower-level guarantee that \alg reaches an SOSP does \textit{not} rely on the strict-saddle property. Rather, this property is used only to conclude that an SOSP corresponds to a lower-level local minimum. Even when strictness does not hold, an SOSP still excludes any stationary point that admits a direction of negative curvature and therefore provides a \textit{strictly stronger} stationarity guarantee than an FOSP. This advantage is also well documented in the literature \citep{dauphin2014identifying,allen2018neon2}. This distinction is particularly important in BLO: as shown in \Cref{fig:EXAMPLE}, FOSP-based methods can become trapped at lower-level saddle points and consequently converge to incorrect upper-level solutions, whereas \alg escapes these saddle points and finds the correct solution.

\textbf{Empirical evidence shows the prevalence of strict saddles in practice.} We conducted an experiment using the same LLM-based setting as in \Cref{sec:app-exp}. Specifically, we identified all the saddle points encountered along each trajectory and computed the minimum eigenvalue of the lower-level Hessian at each of them. As shown in \Cref{tab:strict-saddle}, across five repeated trials, all $61$ encountered saddle points exhibited a negative minimum eigenvalue, providing empirical evidence that the saddles encountered in these runs are strict. This implies that most saddle points are strict in practice, demonstrating the applicability of our theories.

\begin{table}[t]
\centering
\small
\begin{tabular}{@{}lrrrr@{}}
\toprule
\textbf{Trajectory} &
\textbf{Candidates} &
\textbf{Strict} &
\textbf{Non-strict} &
\textbf{Undetermined} \\
\midrule
1 & 8  & 8  & 0 & 0 \\
2 & 20 & 19 & 0 & 1 \\
3 & 7  & 7  & 0 & 0 \\
4 & 11 & 10 & 0 & 1 \\
5 & 18 & 17 & 0 & 1 \\
\midrule
\textbf{Total} & \textbf{64} & \textbf{61} &
\textbf{0} & \textbf{3} \\
\bottomrule
\end{tabular}
\caption{Numerical classification of the perturbation-triggered
lower-level stationary-point candidates encountered along five \alg trajectories. \textbf{Undetermined} refers to candidates that could not be classified conclusively because of numerical instability.}
\label{tab:strict-saddle}
\end{table}

\end{document}